\definecolor{plum}  {rgb}{.4,0,.4}
\definecolor{BrickRed} {rgb}{0.6,0,0}
\newmdtheoremenv{theo}{Theorem}
\def\dashint{\,\ThisStyle{\ensurestackMath{%
  \stackinset{c}{.2\LMpt}{c}{.5\LMpt}{\SavedStyle-}{\SavedStyle\phantom{\int}}}%
  \setbox0=\hbox{$\SavedStyle\int\,$}\kern-\wd0}\int}
\theoremstyle{definition}
\newtheorem{defn}{Definition}[section]
\newtheorem{prop}[defn]{Proposition}
\newtheorem{mythm}[defn]{Theorem}
\newtheorem{lem}[defn]{Lemma}
\newtheorem{remark}{Remark}
{\theoremstyle{plain}

}
\newmdtheoremenv{lembox}[defn]{Lemma}
\newcommand{\mrm}[1]{\mathrm{#1}}
\newcommand{\argmin}[1]{\underset{#1}{\mrm{argmin}} \ }
\newcommand{\algo}{\widehat{f}}
\newcommand{\cH}{\mathcal{H}}
\newcommand{\norm}[1]{\left\|#1\right\|}
\newcommand{\inner}[1]{\left\langle #1 \right\rangle}
\newcommand{\reals}{{\mathbb R}}
\def\deq{\triangleq}
\title{Consistency of Interpolation with Laplace Kernels is a High-Dimensional Phenomenon}
\author{Alexander Rakhlin\\MIT \and Xiyu Zhai\\MIT}
\date{}							
\begin{document}
\maketitle
\thispagestyle{empty}

\begin{abstract}
	We show that minimum-norm interpolation in the Reproducing Kernel Hilbert Space corresponding to the Laplace kernel is not consistent if input dimension is constant. The lower bound holds for any choice of kernel bandwidth, even if selected based on data. The result supports the empirical observation that minimum-norm interpolation (that is, exact fit to training data) in RKHS generalizes well for some high-dimensional datasets, but not for low-dimensional ones.
\end{abstract}

\section{Introduction}

Can a method perfectly fit the training data perform well out-of-sample? In the last few years, this question was raised in the context of over-parametrized neural networks \citep{zhang2016understanding,belkin2018understand}, kernel methods \citep{belkin2018understand,liang2018just}, and local nonparametric rules \citep{belkin2018overfitting,belkin2018does}. Experiments on a range of real and synthetic datasets confirm that procedures attaining zero training error do not necessarily overfit and can generalize well \citep{wyner2017explaining,zhang2016understanding,belkin2018understand,liang2018just}. In particular, Kernel Ridge Regression
\begin{align}
	\label{def:rkhs}
	\algo ~\in~ \argmin{f\in\cH} \frac{1}{n}\sum_{i=1}^n (f(x_i)-y_i)^2 + \lambda\norm{f}^2_{\cH}
\end{align}
performs ``unreasonably well'' in the regime $\lambda= 0$, even though the solution (generally) interpolates the data. Here $\cH$ is a Reproducing Kernel Hilbert Space (RKHS) corresponding to a kernel $K$, $\norm{\cdot}_{\cH}$ is the corresponding RKHS norm, and $(x_1,y_1),\ldots,(x_n,y_n)\in \reals^d\times \reals$ are the training data. Since the argmin in \eqref{def:rkhs} is not unique when $\lambda=0$, we consider the minimum-norm interpolating solution
\begin{align}
	\label{def:rkhs_interpolation}
	&\argmin{f\in\cH} ~~\norm{f}_{\cH}\\
	&\text{s.t.}~~ f(x_i)=y_i,~~ i=1,\ldots,n \notag
\end{align}

The conditions under which interpolation, such as Kernel ``Ridgeless'' Regression, performs well are poorly understood.  \citep{liang2018just} studied the high-dimensional regime $n\asymp d$, explicating (under additional assumptions) a phenomenon of implicit regularization, due to the curvature of the kernel function, high dimensionality, and favorable geometric properties of the training data, as quantified by the spectral decay of the kernel and covariance matrices.

The mechanism of implicit regularization in \citep{liang2018just} relies on high dimensionality $d$ of the input space, and it is unclear whether such a high dimensionality is necessary for good out-of-sample performance of interpolation. Perhaps there is a different mechanism that leads to generalization of minimum-norm interpolants \eqref{def:rkhs_interpolation} for any dimensionality of the input space? Our experiments suggest that this is not the case: \emph{minimum-norm interpolant does not appear to perform well in low dimensions}. The present paper provides a theoretical justification for this observation. We show that the estimation error of \eqref{def:rkhs_interpolation} with the Laplace kernel does not converge to zero as the sample size $n$ increases, unless $d$ scales with $n$. 

We chose to study the Laplace kernel 
\begin{align}
	\label{def:laplace_kernel}
	K_c(x,x')= c^d e^{-c\norm{x-x'}}
\end{align}
for several reasons. First, \cite{belkin2018understand} argue that Laplace kernel regression is more similar to ReLU neural networks than, for instance, Gaussian kernel regression. More precisely, the nonlinearities introduced by the Laplace kernel allow SGD to have a large ``computational reach'' (as argued in \citep{belkin2018understand}, the number of epochs required to fit natural vs random labels for Laplace kernel is well-aligned with the corresponding behavior in ReLU networks). Second, for large $c$, the minimum-norm interpolant in $d=1$ corresponds to simplicial interpolation of \cite{belkin2018overfitting}, and it may be possible to borrow some of the intuition from the latter paper for higher dimensions. Finally, the RKHS norm corresponding to Laplace kernel can be related to a Sobolev norm, facilitating the development of the lower bound in this paper. We also note that non-differentiability of the kernel function at $0$ puts it outside of the assumptions made by \citep{liang2018just}; however, a closer look at \citep{el2010spectrum} reveals that it is enough to assume differentiability in a neighborhood of $0$. Hence, the upper bounds of \citep{liang2018just} can be extended to the case of Laplace kernel, under the high-dimensional scaling $d\asymp n$.

The ``width'' parameter $c$ in \eqref{def:laplace_kernel} plays an important role. In particular, the upper bounds of \citep{liang2018just} were only shown in the specific regime of this parameter, $c\asymp\sqrt{d}$. The choice of $c$ presents a key difficulty for proving a lower bound: perhaps a clever data-dependent choice can yield a good estimator  even in low-dimensional situations? We prove a strong lower bound: \emph{no choice of $c$ can make the interpolation method \eqref{def:rkhs_interpolation} consistent if $d$ is a constant}.

The main theorem can be informally summarized as follows. If $Y_i$ are noisy observations of $f_0(X_i)$ at random points $X_i$, $i=1,\ldots,n$, the minimum-norm interpolant $\algo_c$ --- for the case of Laplacian kernel with any data-dependent choice of width $c$ --- is inconsistent, in the sense that with probability close to $1$,
$$\mathbb{E}_{X\sim \mathcal{P}}(\algo_{c}(X)-f_0(X))^2\ge \Omega_d(1).$$
Here $\mathcal{P}$ is the marginal distribution of $X$, $f_0$ is the regression function, and the order notation $\Omega_d$ stresses the fact that $d$ is a constant.

\section{Main Theorem}

Let $f_0$ be an unknown smooth function over $\Omega=\overline{B_{\mathbb{R}^d}(0,1)}$ that is not identically zero, and $\mathcal{P}$ an unknown distribution over $\Omega$ with probability density function $\rho$ bounded as
\begin{equation}
 	0<c_ \rho\le \rho\le C_ \rho.
 \end{equation}
 Suppose $X_1,\cdots,X_n$ are sampled i.i.d. according to $\mathcal{P}$, and 
 \begin{equation}
  	Y_i=f_0(X_i)+ \xi_i
  \end{equation}
with $\xi_i$ assumed to be i.i.d. noise with $\mathbb{P}(\xi_i=+ 1)=\mathbb{P}(\xi_i=- 1)=\frac{1}{2}$. 

We shall use $\mathcal{S}$ to denote the collection $\{(X_i,Y_i)\}_{i=1}^n$. Let $\algo_{c}$ be the minimum-norm function interpolating $(X_i,Y_i)$, with respect to Laplace kernel $K_c(x,y)= c^de^{-c\|x-y\|}$. 
\begin{mythm}
	For fixed $n$ and odd dimension $d$, with probability at least $1-O\left(\frac{1}{\sqrt{n}}\right)$ over the draw of $\mathcal{S}$,
 \begin{equation}
 	\forall c>0,~~~ \mathbb{E}_{X\sim \mathcal{P}}(\algo_{c}(X)-f_0(X))^2\ge \Omega_d(1).
 \end{equation}	
\end{mythm}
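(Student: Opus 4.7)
The plan is to exploit the fact that, when $d$ is odd, $(d+1)/2$ is an integer and the Laplace kernel RKHS norm has an explicit Fourier representation as a finite sum of scaled Sobolev seminorms. Writing $\widehat{K_c}(\omega)\propto c^{d+1}/(c^2+\norm{\omega}^2)^{(d+1)/2}$ and expanding the denominator binomially gives
\[\norm{f}_{\cH_c}^2 \;=\; \tfrac{1}{C_d}\sum_{k=0}^{(d+1)/2}\binom{(d+1)/2}{k}\, c^{\,2k-(d+1)}\,\norm{f}_{\dot{H}^{(d+1)/2-k}}^2.\]
Since every summand is nonnegative, two monotone bounds are available for \emph{every} $c>0$: the $k=(d+1)/2$ term gives $\norm{f}_{L^2}^2 \le C_d\, \norm{f}_{\cH_c}^2$, and the $k=0$ term gives $\norm{f}_{\dot{H}^{(d+1)/2}}^2 \le C_d\, c^{d+1}\,\norm{f}_{\cH_c}^2$. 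Combined with the Sobolev embedding $H^{(d+1)/2}(\reals^d)\hookrightarrow C^{1/2}(\reals^d)$ (valid because $(d+1)/2>d/2$), these yield quantitative control of $\algo_c$ in terms of its RKHS norm.

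The argument then splits into regimes. In the \emph{large-$c$} regime $c\ge c^\star_n$, with $c^\star_n$ of order $n^{2/d}\log n$, the representer theorem gives $\algo_c(x)=\sum_i\alpha_i K_c(X_i,x)$ with $\alpha=K^{-1}y$ and $K_{ij}=c^d e^{-c\norm{X_i-X_j}}$. Concentration of i.i.d.\ samples in $\Omega$ ensures pairwise distances are $\gtrsim n^{-2/d}$ w.h.p., so off-diagonal entries of $K$ are exponentially smaller than the diagonal $c^d$. A perturbation estimate then yields $\norm{\algo_c}_{\cH_c}^2=y^\top K^{-1}y=(1+o(1))\,n/c^d = o(1)$, and by the $L^2$ bound above $\norm{\algo_c}_{L^2(\mathcal{P})}^2=o(1)$. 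Since $f_0\not\equiv 0$ and $\rho\ge c_\rho$ force $\norm{f_0}_{L^2(\mathcal{P})}=\Omega_d(1)$, the triangle inequality delivers the claim: at large $c$ the interpolant degenerates to thin spikes at the $X_i$'s and carries negligible $L^2$ mass.

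In the complementary \emph{moderate-$c$} regime $c\le c^\star_n$, I would upper bound $\norm{\algo_c}_{\cH_c}$ via an explicit competing interpolant such as $\bar f(x)=f_0(x)+\sum_i \xi_i\phi_i(x)$, with each $\phi_i$ a compactly supported bump of unit height at $X_i$ whose width is tuned to $c$ so as to balance the seminorms in the identity (roughly $\min(1/c,n^{-1/d})$). Combining the resulting bound with the $k=0$ inequality controls $\norm{\algo_c}_{\dot{H}^{(d+1)/2}}$, and the Sobolev embedding then bounds the H\"older-$1/2$ modulus of $\algo_c$. Because $(\algo_c-f_0)(X_i)=\xi_i=\pm 1$, this H\"older modulus forces $|\algo_c-f_0|\ge 1/2$ on balls of definite radius around a positive fraction of well-separated $X_i$'s (after discarding an $o(n)$ set with anomalously close neighbors), yielding $\norm{\algo_c-f_0}_{L^2(\mathcal{P})}^2=\Omega_d(1)$.

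To handle an arbitrary, possibly data-dependent $c$, I would discretize $c>0$ on a logarithmic grid fine enough that the kernel matrix entries, and hence $\algo_c$ itself, move by only a constant factor between adjacent grid points, and then union bound over the grid. The main technical obstacle I anticipate is the moderate-$c$ step: sharpening the candidate-interpolant construction so that the induced H\"older modulus produces \emph{genuinely constant-size} plateaus around the $X_i$'s, uniformly across $c\in(0,c^\star_n]$ and the i.i.d.\ sample geometry. The transition region $c\asymp n^{1/d}$, where neither the bump construction nor the spike argument is individually tight, is where the interplay between bandwidth and sample spacing must be most delicately managed, and where the $\Omega_d(1)$ constant and the $1-O(1/\sqrt n)$ failure probability of the theorem are ultimately set.
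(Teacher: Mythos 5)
Your overall scaffolding---the Fourier/binomial form of the Laplace RKHS norm in odd $d$, a two--regime split, and the $L^2$ collapse of the interpolant at large $c$---matches the paper's plan, but both of your regime arguments have substantive gaps and they do not meet.

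The more serious issue is the moderate-$c$ regime. Sobolev embedding plus the $k=0$ seminorm bound gives at best $\|\algo_c-f_0\|_{\dot H^{(d+1)/2}}^2\lesssim c^{d+1}\langle\algo_c-f_0\rangle_{\cH_c}$, and with the bump-competitor bound $\langle\algo_c\rangle_{\cH_c}\lesssim 1+n^{(d+1)/d}/c^{d+1}$ this caps the \emph{global} H\"older-$1/2$ modulus at roughly $\max(c^{(d+1)/2},\, n^{(d+1)/(2d)})$. The resulting plateau radius around each $X_i$ is then $\lesssim n^{-(d+1)/d}$, so the total $L^2$ mass you certify is $\lesssim n\cdot n^{-(d+1)}=n^{-d}\to 0$ --- the argument does not give $\Omega(1)$ even at fixed $c$. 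The obstruction is structural: the worst-case global H\"older constant can be driven by a handful of spikes near unusually close sample pairs, while the $L^2$ contribution must come from the bulk. What the paper does instead is build a \emph{localized} Morrey/H\"older functional $[f]_{\eta,\mathcal{S},I}$, using cutoffs $\eta((x-X_i)/r_i)$ at the sample-dependent scale $r_i$ and \emph{summing} over $i\in I$; via Leibniz, Gagliardo--Nirenberg on the domain, and Morrey, this sum is controlled by $c^{d+1}\langle f\rangle_{\cH_c}+\max_i r_i^{-d-1}$ (Lemma~\ref{zerutal}), and only this spatially distributed bound gives plateaus of radius comparable to $r_i\sim n^{-1/d}$ at a constant fraction of points, yielding $\Omega_d(1)$ for all $c\lesssim n^{1/d}$ (Prop.~\ref{Holder}, Prop.~\ref{prop:first}). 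Secondly, your large-$c$ argument via diagonal dominance of the Gram matrix is sound but kicks in only at $c\gtrsim n^{2/d}\log n$ (set by the $\sim n^{-2/d}$ minimum pairwise gap), leaving a wide window $[n^{1/d},\,n^{2/d}\log n]$ uncovered by either regime; the paper's second method avoids this by reusing the same explicit competitor: once $c\gtrsim n^{1/d}$, its RKHS norm, whose zeroth term \emph{is} the $L^2$ norm, already drops below $\tfrac{2}{3}\|f_0\|_{L^2}^2$, which is all that is needed. Finally, the union bound over a grid of $c$ is unnecessary in the paper's scheme: the high-probability events concern only the $r_i$'s and the competitor bound, and the resulting deterministic inequalities hold simultaneously for every $c$ in each regime.
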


\begin{remark}
	We remark that the lower bound holds for any data-dependent choice $c$. The requirement that $d$ be odd is for technical simplicity, and we believe that our results can be extended to even dimensions by using more complicated tools in harmonic analysis. The assumption of binary noise process is for brevity; the noise magnitude can be changed by simple rescaling.
\end{remark}

For regularized least squares \eqref{def:rkhs}, the parameter $\lambda>0$ leads to a control of the norm of $\algo$. In the absence of explicit regularization, such a complexity control is more difficult to establish. Intuitively, the norm of the solution should be related to distances between datapoints, since the interpolating solution fits the noisy function values (separated by a constant), implying a large derivative if datapoints are close. More precisely, given the values $X_1,\ldots,X_n$, we define
\begin{equation}
	r_i:=\min(\min\limits_{j\neq i}{\|X_i-X_j\|},\text{dist}(X_i, \partial \Omega))
\end{equation}
for each $i=1,\ldots,n$. Analyzing the behavior of the random variables $r_i$ underlies the main proofs in this paper. While it is known that $\mathbb{E} [r_i] \lesssim n^{-1/2}$ \citep{gyorfi2006distribution}, our proofs require more delicate control of the tails of powers of these variables, including $r_i^{-1}$. As we show, the estimation error can be related to these random quantities, via Gagliardo-Nirenberg interpolation inequalities and control of higher-order derivatives.

\section{Proof}
\subsection{Outline}

\begin{enumerate}[(i)]
	\item We show that in odd dimension $d$, the RKHS norm has an explicit form, equal to a Sobolev norm.
	\item As the RKHS norm becomes the Sobolev norm, we can control ``smoothness'' of $\algo_{c}$ by controlling the RKHS norm. Since $\algo_{c}$ and $f_0$ differ on points $X_i$ by the amount $\xi_i$, and both functions are ``smooth'', we can choose small regions around $X_i$ such that the squared loss over these regions can be lower bounded. Unfortunately, the lower bound becomes vacuous as $c$ goes to infinity. Hence, we need a different strategy for ``large'' $c$. 
	\item When $c$ is large, the RKHS norm approximates the $L^2$-norm of $\mathbb{R}^d$. We then show that after $c$ passes a certain threshold, the $L^2$-norm of $\algo_{c}$ becomes smaller than a constant fraction of the norm of $f_0$, implying a lower bound on the total squared loss. 
	\item Remarkably, the two distinct lower bounds in $(ii)$ and $(iii)$ cover all the choices of $c$, a result that is not immediately evident.
\end{enumerate}




More specifically, we shall show that

\begin{prop}[First method]
	\label{prop:first}
	Fix a positive constant $A>0$. Then with probability at least $1-O_{d,\rho, A}\left(\frac{1}{\sqrt{n}}\right)$, for any $c\le A\sqrt[d]{n}$ we have
	\begin{equation}
		L(\algo_{c}) \deq \mathbb{E}\left(\algo_{c}(X)-f_0(X)\right)^2\ge \Omega_{d, \rho, f_0, A}\left(1\right).
	\end{equation}
\end{prop}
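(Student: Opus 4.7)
\emph{Proof proposal.} My plan is to bound $L(\hat f_c)$ from below through four ingredients: an RKHS/Sobolev equivalence (available for odd $d$), a bump-function upper bound on $\|\hat f_c\|_{\cH_c}$, a tail estimate on the nearest-neighbor distances, and a local Gagliardo--Nirenberg lower bound on $\|\hat f_c-f_0\|_{L^2(\mathcal{P})}$.

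\emph{Step 1 (RKHS = Sobolev).} Since $d$ is odd, $s:=(d+1)/2$ is a positive integer. The Fourier transform of the Laplace kernel satisfies $\widehat{K_c}(\xi)\propto c^{d+1}(c^2+\|\xi\|^2)^{-s}$, so
\begin{equation}
\|f\|_{\cH_c}^2 \;\asymp\; \int|\hat f(\xi)|^2\bigl(1+\|\xi\|^2/c^2\bigr)^{s}\,d\xi,
\end{equation}
an honest (integer-order) weighted Sobolev norm. Isolating the highest-weight piece yields $\|\nabla^s f\|_{L^2(\reals^d)}^2 \le c^{d+1}\|f\|_{\cH_c}^2$.

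\emph{Step 2 (Bump upper bound).} Let $\phi$ be a smooth bump supported in $B(0,1)$ with $\phi(0)=1$ and put $g^\star(x)=\sum_i Y_i\,\phi\bigl((x-X_i)/(r_i/2)\bigr)$. The balls $B(X_i,r_i/2)$ are pairwise disjoint by definition of $r_i$, so $g^\star(X_j)=Y_j$. Using that $s$ is an integer, derivatives on disjoint supports decouple and the Sobolev norm breaks up additively across bumps; converting back through Step 1,
\begin{equation}
\|\hat f_c\|_{\cH_c}^2 \;\le\; \|g^\star\|_{\cH_c}^2 \;\lesssim\; \frac{1}{c^{d+1}}\sum_{i=1}^n r_i^{-1} \;+\; \sum_{i=1}^n r_i^{d}.
\end{equation}

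\emph{Step 3 (Tail estimates).} Using the density lower bound $\rho\ge c_\rho$ and a moment analysis of the nearest-neighbor distances, with probability at least $1-O(1/\sqrt n)$ we have $\sum_i r_i^{-1}\lesssim n^{(d+1)/d}$ and $\sum_i r_i^d\gtrsim 1$. Combining with Step 2, and noting $\|f_0\|_{\cH_c}^2 \lesssim c^{-(d+1)} + 1$ from the Fourier formula applied to $f_0\in H^s$,
\begin{equation}
c^{d+1}\,\|\hat f_c-f_0\|_{\cH_c}^2 \;\lesssim\; n^{(d+1)/d} + c^{d+1} \;\lesssim\; n^{(d+1)/d}
\end{equation}
throughout $c\le An^{1/d}$. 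Hence by Step 1
\begin{equation}
\|\nabla^s(\hat f_c-f_0)\|_{L^2(\reals^d)}^2 \;\lesssim\; n^{(d+1)/d}, \qquad \text{uniformly in } c\le An^{1/d}.
\end{equation}

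\emph{Step 4 (Local Gagliardo--Nirenberg).} Let $g=\hat f_c-f_0$, so $|g(X_i)|=1$ for each $i$. On each ball $B_i=B(X_i,r_i/2)$, rescale to the unit ball and apply the classical inequality $\|h\|_{L^\infty}^{d+1}\le C\|h\|_{L^2}\|h\|_{H^s}^{d}$. A case analysis on whether the $L^2$- or the top-derivative part of the rescaled $H^s$-norm dominates yields the scale-invariant local bound
\begin{equation}
\|g\|_{L^2(B_i)}^2 \;\gtrsim\; \min\!\bigl(r_i^{d},\ \|\nabla^s g\|_{L^2(B_i)}^{-2d}\bigr).
\end{equation}
Summing over $i$ and invoking the Sobolev budget $\sum_i\|\nabla^s g\|_{L^2(B_i)}^2 \le \|\nabla^s g\|_{L^2(\Omega)}^2 \lesssim n^{(d+1)/d}$ (Step 3) together with $\sum_i r_i^d\gtrsim 1$ (Step 3) shows, after a book-keeping argument controlling how much of the derivative mass can land in the ``hard'' branch of the minimum, that $\sum_i\|g\|_{L^2(B_i)}^2\gtrsim 1$. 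Therefore
\begin{equation}
L(\hat f_c) \;\ge\; c_\rho\sum_i\|g\|_{L^2(B_i)}^2 \;\ge\; \Omega_{d,\rho,f_0,A}(1).
\end{equation}

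The main technical obstacle is the final summation in Step 4: even though the total Sobolev mass $\sum_i\|\nabla^s g\|^2_{L^2(B_i)}$ is controlled, it could in principle concentrate on a small set of balls, forcing many terms into the second branch of the minimum and wasting the volume $\sum r_i^d$. Ruling this out requires the sharp distributional control of the random quantities $r_i^{-1}$ and $r_i$ alluded to in the paper's discussion of ``tails of powers'' of the nearest-neighbor distances, and is the source of the $1-O(1/\sqrt n)$ failure probability.
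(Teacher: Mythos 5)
Your proposal is correct in spirit and, in its final step, takes a genuinely different route from the paper. Steps~1--3 line up closely with the paper's machinery: the Sobolev form of the RKHS norm (the paper's Proposition~\ref{sunephone_is_always_my_bigbrother}), the bump-function witness for the interpolant's RKHS norm (Proposition~\ref{interpolation1}; your simplified bound $c^{-(d+1)}\sum r_i^{-1}+\sum r_i^d$ correctly dominates the paper's full sum $\sum_k c^{-2k}\sum_i r_i^{d-2k}$ by convexity of $k\mapsto (cr_i)^{-2k}$), and the moment control of $r_i$ (Propositions~\ref{ri1}--\ref{ri2}).

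Where you genuinely diverge is Step~4. The paper does not apply a local Sobolev embedding on each ball independently; instead it introduces a single global shrinking parameter $\beta$, writes $\|f\|_{L^2}^2\ge\frac{3}{4}|B_\beta|\sum_i f(X_i)^2-3\beta|B_\beta|\max_i r_i[f]_{\eta,\mathcal S,I}$ with a seminorm $[f]_{\eta,\mathcal S,I}$ controlled once and for all by Morrey + Leibniz + Gagliardo--Nirenberg (Lemmas~\ref{tassadar},~\ref{zerutal}), and then optimizes $\beta$. This avoids any per-ball dichotomy. Your route produces instead a per-ball lower bound $\|g\|_{L^2(B_i)}^2\gtrsim\min(r_i^d,\,a_i^{-d})$ with $a_i=\|\nabla^s g\|_{L^2(B_i)}^2$, and then needs a pigeonhole step. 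That step does close, but only with the full strength of Proposition~\ref{ri2}, not merely the two sum bounds $\sum r_i^d\gtrsim 1$ and $\sum r_i^{-1}\lesssim n^{(d+1)/d}$ as you state them: with those sums alone, a handful of balls with $r_i=\Theta(1)$ could carry all of $\sum r_i^d$ and simultaneously satisfy $a_i>r_i^{-1}$ at trivial Sobolev cost, while every small-$r_i$ ball sits in the hard branch. What rescues the argument is the subset $I$ with $|I|\ge\frac{9}{10}n$ and $r_i\asymp n^{-1/d}$ for $i\in I$: then $i\in I$ falls in the hard branch only if $a_i\gtrsim n^{1/d}$, and Markov applied to $\sum_{i\in I} a_i\lesssim n^{(d+1)/d}$ bounds the number of such $i$ by $O(n/K)$; taking $K$ large leaves $\Omega(n)$ indices in the easy branch, each contributing $\asymp r_i^d\asymp n^{-1}$, so $\sum_i\|g\|_{L^2(B_i)}^2\gtrsim 1$. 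You flagged this as the main technical obstacle and correctly located its resolution in the tail control of $r_i$; with that insertion, your Step~4 is a valid and arguably more elementary alternative to the paper's $\beta$-optimization argument, trading the paper's one-shot global estimate for a per-ball dichotomy plus a clean counting argument.
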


\begin{prop}[Second Method]
	\label{prop:second}
	There exists a constant $B=B(d, \rho, f_0)>0$ independent of $n$ such that with probability at least $1-O_{d, \rho}\left(\frac{1}{\sqrt{n}}\right)$, for any $c>B\sqrt[d]{n}$ we have
\begin{equation}
	\mathbb{E}\left(\algo_{c}(X)-f_0(X)\right)^2\ge \Omega_{d, \rho,f_0}\left(1\right).
\end{equation}
\end{prop}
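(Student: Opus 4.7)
The plan has three steps, relying on the Fourier picture of the Laplace RKHS. First, I would establish the \textit{norm comparison}: since $\widehat{K_c}(\omega) = \kappa_d c^{d+1}/(c^2+|\omega|^2)^{(d+1)/2}$, the RKHS norm has the representation
\[ \|f\|_{\cH}^2 = \kappa_d^{-1} \int_{\reals^d} |\widehat{f}(\omega)|^2\, \frac{(c^2+|\omega|^2)^{(d+1)/2}}{c^{d+1}}\, d\omega. \]
Since the weight factor $((c^2+|\omega|^2)/c^2)^{(d+1)/2}$ is $\geq 1$ pointwise, this gives $\|f\|_{L^2(\reals^d)}^2 \leq C_d \|f\|_{\cH}^2$ \emph{uniformly in $c$}. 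Composed with the density bound $\rho \geq c_\rho$, this reduces the problem to upper-bounding $\|\algo_c\|_{\cH}$.

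Next I would bound $\|\algo_c\|_{\cH}^2 = Y^\tr K_c^{-1} Y$ by exploiting the diagonal dominance of the kernel matrix for large $c$. Writing $K_c = c^d(I + M')$, where $M'_{ij} = e^{-c\|X_i - X_j\|}$ off-diagonal and $M'_{ii} = 0$, the key claim is that $\|M'\|_{\mathrm{op}} \leq 1/2$ (hence $\lambda_{\min}(K_c) \geq c^d/2$) with probability at least $1 - O_{d,\rho}(1/\sqrt{n})$, provided $c \geq B n^{1/d}$ for $B = B(d,\rho,f_0)$ sufficiently large. Granted this, $\|\algo_c\|_{\cH}^2 \leq 2\|Y\|^2/c^d = 2n/c^d \leq 2/B^d$, and the Fourier comparison yields $\|\algo_c\|_{L^2(\Omega)}^2 \leq 2C_d/B^d$, which can be made arbitrarily small by the choice of $B$.

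The conclusion then follows by triangle inequality. Choosing $B$ large enough that $\sqrt{2C_d/B^d} < \|f_0\|_{L^2(\Omega)}/2$ (possible since $f_0$ is fixed, smooth, and not identically zero), we obtain $\|\algo_c - f_0\|_{L^2(\Omega)} \geq \|f_0\|_{L^2(\Omega)}/2$, and the density lower bound gives $\mathbb{E}_X(\algo_c(X) - f_0(X))^2 \geq c_\rho \|f_0\|_{L^2(\Omega)}^2/4 = \Omega_{d,\rho,f_0}(1)$. Uniformity over all $c \geq Bn^{1/d}$ is automatic: the entries of $M'(c)$ decrease monotonically in $c$, so the event $\{\|M'(c)\|_{\mathrm{op}} \leq 1/2\}$ is monotone in $c$, while the bound $2n/c^d$ only improves.

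The main obstacle is the operator-norm bound on $M'$ in the second step. A crude Frobenius estimate gives $\mathbb{E}[\|M'\|_F^2] \asymp n^2 \mathbb{E}[e^{-2c\|X_1-X_2\|}] \asymp n^2/c^d$, which at $c = Bn^{1/d}$ is of order $n/B^d$---this only implies $\|M'\|_{\mathrm{op}} = O(\sqrt{n}/B^{d/2})$, much too weak by a factor $\sqrt{n}$. The sharper bound should hold because $M'$ resembles a sparse random matrix at the relevant scale (expected row-sums of order $1/B^d$, with mass well-spread rather than concentrated), but making it rigorous will require either a matrix-Bernstein-type argument for the symmetric decomposition $M' = \sum_{i<j} e^{-c\|X_i - X_j\|}(e_i e_j^\tr + e_j e_i^\tr)$, or a high-moment trace computation $\mathbb{E}[\mathrm{tr}((M')^{2k})]$ with $k$ tuned so that Markov's inequality delivers the stated $1/\sqrt{n}$ tail.
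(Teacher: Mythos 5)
Your outer skeleton (Fourier norm comparison, then triangle inequality from $\|\algo_c\|_{L^2}\le \tfrac12\|f_0\|_{L^2}$, then the density lower bound) matches the paper's. The gap is in the middle step, and it is not a missing technical estimate: the operator-norm bound $\|M'\|_{\mathrm{op}}\le 1/2$ with high probability for all $c\ge Bn^{1/d}$ is \emph{false} for any fixed $B$. Since for a symmetric matrix $\|M'\|_{\mathrm{op}}\ge \max_{i\ne j}|M'_{ij}|$, the event $\{\|M'\|_{\mathrm{op}}\le 1/2\}$ forces every pair of sample points to be at distance at least $(\ln 2)/c$. At $c=Bn^{1/d}$ the expected number of pairs within distance $(\ln 2)/c$ is of order $n^2\cdot c^{-d}\asymp n/B^d\to\infty$, so with probability tending to $1$ at least one such pair exists and $\|M'\|_{\mathrm{op}}>1/2$. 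The minimum pairwise distance among $n$ i.i.d.~points in $\mathbb{R}^d$ is $\Theta(n^{-2/d})$, not $\Theta(n^{-1/d})$; diagonal dominance of $K_c$ only begins at $c\gg n^{2/d}$, leaving the entire window $Bn^{1/d}< c\ll n^{2/d}$ uncovered. This also means that the proposed rescue via matrix Bernstein or a high-moment trace estimate cannot succeed: you would be trying to prove a statement that does not hold. The "well-spread sparse matrix" picture is off precisely because the $\Theta(1)$ entries of $M'$ are concentrated at the close pairs, which are the source of the small eigenvalues of $K_c$.

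The paper avoids the kernel-matrix spectrum entirely. To control $\langle\algo_c\rangle_{\cH_c}$, it builds an explicit interpolant $g_\alpha=\sum_i Y_i\,\eta\!\left((\cdot-X_i)/(\alpha r_i)\right)$, a sum of bump functions with pairwise disjoint supports (each in a ball of radius $\alpha r_i/2$, with $r_i$ the nearest-neighbor distance). Disjointness makes the RKHS/Sobolev norm additive over $i$, with each summand scaling like $r_i^{d-2k}$ in the $k$-th derivative term; the required aggregate bounds on $\sum_i r_i^{d-2k}$ follow from Proposition~\ref{ri1}, which controls $\frac{1}{n}\sum r_i^k$ for $-1\le k\le d$ with high probability. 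Because the $L^2$-contribution of the bump at $X_i$ shrinks like $r_i^d$, a few very close pairs hurt rather than help this upper bound, making the construction robust exactly where the spectral argument fails. Minimality of $\algo_c$ then transfers the bound to the interpolant. If you want an argument that proceeds through $Y^\tr K_c^{-1}Y$, you would need to exploit that the bad eigenvectors of $K_c$ are nearly $e_i-e_j$ for close pairs and that $(Y_i-Y_j)^2$ is only $O(\|X_i-X_j\|^2)$ when $\xi_i=\xi_j$; this is a genuinely delicate conditional argument, and the paper's variational route is a much cleaner path to the same conclusion.
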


Now we take the constant $A$ in the first method to be equal to $B $ and combine the two propositions, concluding that with high probability
\begin{equation}
	\forall c\in \mathbb{R},~~ L(\algo_{c})\ge \Omega(1).
\end{equation}

\subsection{Notation}

We work with the RKHS $\cH_c$ corresponding to the Laplace kernel \eqref{def:laplace_kernel}. The subscript emphasizes our focus on the width $c$. The inner product in $\cH_c$ is denoted by $\inner{f,g}_{\cH_c}$, and $\norm{f}^2_{\cH_c} = \inner{f}_{\cH_c}$ denotes the squared norm. Note that here we use the scaling as described in Proposition \ref{sunephone_is_always_my_bigbrother} so that

\begin{equation}
	\langle f\rangle_{\cH_c}=\sum\limits_{i=0}^{\frac{d+1}{2}}\binom{\frac{d+1}{2}}{i}c^{-2i}\langle f\rangle_i=\|f\|_{L^2(\mathbb{R}^d)}+\sum\limits_{i=1}^{\frac{d+1}{2}}\binom{\frac{d+1}{2}}{i}c^{-2i}\langle f\rangle_i
\end{equation}
where
\begin{equation}
	\langle f\rangle_i\deq\int_{\mathbb{R}^d}|\mathcal{F}f|^2\|p\|^{2i}dp=C_{d,i}\|D^if\|_{L^2(\mathbb{R}^d)}^2.
\end{equation}

\subsection{First Method: Control of H\"{o}lder Continuity}


\begin{proof}[Proof of Proposition~\ref{prop:first}]

	Denoting $f \deq \algo_{c}-f_0$, 
\begin{align}
		\mathbb{E}\left(\algo_{c}(X)-f_0(X)\right)^2 \ge \Omega_{d, \rho}\left(\|f\|_{L^2(\Omega)}^2\right).
\end{align}
Hence, we need only to give a lower bound to $\|f\|^2_{L^2(\Omega)}$. From Proposition \ref{Holder}, for any $I\subset [n]$, 
	\begin{align}
			&\|f\|_{L^2(\Omega)}^2\ge\min \left\{1, \Omega_d\left(\left(\frac{\min\limits_{i\in I}r_i^{-d-1}\sum\limits_{i\in I}r_i^{d}f(X_i)^2}{\max\limits_{i\in I}r_i^{-d-1}+c^{d+1} \langle f\rangle_{\cH_c}}\right)^d\sum_{i\in I}r_i^{d}f(X_i)^2\right)\right\}.
	\end{align}

	We can prove the proposition by giving upper bounds for $\max\limits_{i\in I}r_i^{-d-1}$ and $c^{d+1} \langle f\rangle_{\cH_c}$ and lower bounds for $\min\limits_{i\in I}r_i^{-d-1}$ and $\sum\limits_{i\in I}r_i^{d}f(X_i)^2$.

\begin{enumerate}[\textbf{\text{Estimate }}A.]
	\item

From Proposition \ref{ri2}, with probability $1-O_{d, \rho}(\frac{1}{\sqrt{n}})$ there is a subset $I\subset [n]$ of size at least $\frac{9}{10}n$ such that 
\begin{equation}
\label{eq:sbst}
	\Omega_{d, \rho}\left(n^{-\frac{1}{d}}\right)\le\min_{i\in I}r_i\le\max_{i\in I}r_i\le O_{d, \rho}\left(n^{-\frac{1}{d}}\right).
\end{equation}

Hence,
\begin{equation}
	\Omega_{d, \rho}\left(n^{\frac{d+1}{d}}\right)\le\min_{i\in I}r_i^{-d-1}\le\max_{i\in I}r_i^{-d-1}\le O_{d, \rho}\left(n^{ \frac{d+1}{d}}\right).
\end{equation}
\item

Note that for any $i$,
\begin{equation}
	f(X_i)^2=(\algo_c(X_i)-f_0(X_i)^2=(Y_i-f_0(X_i))^2=\xi_i^2=1,
\end{equation}

Then applying equation \eqref{eq:sbst} we get
\begin{equation}
	\sum\limits_{i\in I} r_i^d f(X_i)^2\ge \Omega_{d, \rho }\left(\sum_{i\in I} \left(n^{-\frac{1}{d}}\right)^d \cdot 1\right)\ge \Omega_{d, \rho, f_0, A}(1).
\end{equation}
\item

From Proposition \ref{interpolation1}, with probability $1-O_{d,\rho}\left(\frac{1}{\sqrt{n}}\right)$

\begin{equation}
	\begin{split}
		c^{d+1}\langle \algo_{c}\rangle_{\cH_c}&\le
		c^{d+1}\left(\frac{1}{3}\|f_0\|_{L^2(\omega)}^2+O_{d, \rho, f_0}\left(\frac{\sqrt[d]{n}}{c}\left(1+\frac{\sqrt[d]{n}}{c}\right)^{d}\right)\right)\\
		&\le
		O_{d, \rho, f_0}\left(c^{d+1}+ \sqrt[d]{n}\left(c+ \sqrt[d]{n} \right)^{d} \right)\\
		&\le
		O_{d, \rho, f_0}\left(A^{d+1}n^{\frac{d+1}{d}}+ \sqrt[d]{n}\left(A\sqrt[d]{n}+ \sqrt[d]{n} \right)^{d} \right)\\
		&=O_{d, \rho, f_0,A}\left(n^{\frac{d+1}{d}}\right).
	\end{split}
\end{equation}

It then follows that
\begin{equation}
	c^{d+1}\langle f\rangle_{\cH_c}\le 2 c^{d+1}\langle\algo_{c}\rangle_{\cH_c}+2c^{d+1}\langle f_0\rangle_{\cH_c}\le O_{d, \rho ,f_0, A}(n^{\frac{d+1}{d}}).
\end{equation}
 
\end{enumerate}

All the upper and lower bounds have been obtained. Then, with probability $1-O_{d,\rho}\left(\frac{1}{\sqrt{n}}\right)$
\begin{equation}
	\left(\frac{\min\limits_{i\in I}r_i^{-d-1}\sum_{i\in I}r_i^{d}f(X_i)^2}{\max\limits_{i\in I}r_i^{-d-1}+c^{d+1} \langle f\rangle_{\cH_c}}\right)^d\sum_{i\in I}r_i^{d}f(X_i)^2\ge\Omega_{d, \rho, f_0, A}(1).
\end{equation}

As a result, with probability at least $1-O_{d,\rho,f_0, A}\left(\frac{1}{\sqrt{n}}\right)$,
\begin{equation}
	\begin{split}
		L(\algo_{c}) =\mathbb{E}\left(\algo_{c}(X)-f_0(X)\right)^2\ge \Omega_{d, \rho, f_0, A}\left(1\right).
	\end{split}
\end{equation}
\end{proof}

\subsection{Second Method: Control of \texorpdfstring{$L^2$}{Lg} norm}
\begin{proof}[Proof of Proposition~\ref{prop:second}]

We need only to show the existence of $B$ such that
\begin{equation}
	\forall c>B\sqrt[d]{n},~~ \|\algo_{c} -f_0  \|^2_{L^2(\Omega)}\ge \Omega_{d, \rho,f_0}\left(1\right).
\end{equation}

From equation (\ref{interpolation_eq}) in Proposition \ref{interpolation1},

\begin{equation}
	\begin{split}
		\langle \algo_{c}\rangle_{\cH_c}&\le
		\frac{1}{3}\|f_0\|_{L^2(\Omega)}^2+O_{d, \rho, f_0}\left(\frac{\sqrt[d]{n}}{c}\left(1+\frac{\sqrt[d]{n}}{c}\right)^{d}\right)\\
		&\le
		\frac{1}{3}\|f_0\|_{L^2(\Omega)}^2+O_{d, \rho, f_0}\left(\frac{1}{B}\left(1+\frac{1}{B}\right)^{d}\right)\\
	\end{split}
\end{equation}

Then for $B=B(d, \rho, f_0)$ large enough,

\begin{equation}
	\begin{split}
		\langle \algo_{c}\rangle_{\cH_c}&\le
		\frac{1}{3}\|f_0\|_{L^2(\Omega)}^2+\frac{1}{3}\|f_0\|_{L^2(\Omega)}^2\\
		&\le
		\frac{2}{3}\|f_0\|_{L^2(\Omega)}^2.
	\end{split}
\end{equation}
Then
\begin{equation}
	\|\hat{f}-f_0\|_{L^2(\Omega)}\ge
	\|f_0\|_{L^2(\Omega)}-\|\algo_{c}\|_{L^2(\Omega)}
	\ge \left(1-\sqrt{\frac{2}{3}}\right) \|f_0\|_{L^2(\Omega)}=\Omega_{d, \rho,f_0}\left(1\right).
\end{equation}

\end{proof}
\appendix 

\section{Explicit Form of RKHS norm}

In this section, we provide an expression, up to constant factors, for the RKHS norm corresponding to the Laplace kernel, along with the associated eigenfunctions and eigenvalues.

\begin{prop}
\label{sunephone_is_always_my_bigbrother}
	Consider the kernel $K_c(x,y)=c^d e^{-c\|x-y\|}$ in $\mathbb{R}^d$ with $d$ odd. The corresponding RKHS norm is given by
	\begin{equation}
	\langle f\rangle_{\cH_c}\sim\int_{\mathbb{R}^d}|\mathcal{F}f|^2(1+\|p\|^2/c^2)^{\frac{d+1}{2}}dp\sim\sum\limits_{i=0}^{\frac{d+1}{2}}\binom{\frac{d+1}{2}}{i}c^{-2i}\langle f\rangle_i.
\end{equation}
where
\begin{equation}
	\langle f\rangle_i=\int_{\mathbb{R}^d}|\mathcal{F}f|^2\|p\|^{2i}dp=C_{d,i}\|D^if\|_{L^2(\mathbb{R}^d)}^2.
\end{equation}
and the Fourier transformation $\mathcal{F}$ is chosen such that
\begin{equation}
	\langle f\rangle_0=\|f\|_{L^2(\Omega)}^2.
\end{equation}

As scaling does not change the output of the algorithm, we take the convention that
\begin{equation}
	\langle f\rangle_{\cH_c}=\sum\limits_{i=0}^{\frac{d+1}{2}}\binom{\frac{d+1}{2}}{i}c^{-2i}\langle f\rangle_i=\|f\|_{L^2(\mathbb{R}^d)}+\sum\limits_{i=1}^{\frac{d+1}{2}}\binom{\frac{d+1}{2}}{i}c^{-2i}\langle f\rangle_i
\end{equation}

\end{prop}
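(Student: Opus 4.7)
The plan is to combine the Bochner representation of RKHS norms for translation-invariant kernels with an explicit Fourier-transform computation for $e^{-\|x\|}$, and then exploit the odd-dimensional hypothesis to expand the resulting symbol via the binomial theorem. As input I would take the classical fact that for a kernel of the form $K(x,y)=k(x-y)$ with $k\in L^1(\mathbb{R}^d)$ strictly positive-definite, the RKHS inner product admits the Fourier representation $\langle f\rangle_{\cH} = \int_{\mathbb{R}^d} |\mathcal{F}f(p)|^2/\widehat{k}(p)\,dp$ up to the universal constant coming from the choice of Fourier convention.

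The core analytic step is to show $\mathcal{F}[e^{-\|\cdot\|}](p) = C_d (1+\|p\|^2)^{-(d+1)/2}$ in all dimensions; the cleanest derivation I would use is the subordination identity
\begin{equation*}
e^{-\|x\|} = \frac{1}{\sqrt{\pi}} \int_0^\infty t^{-1/2} e^{-t} e^{-\|x\|^2/(4t)}\,dt,
\end{equation*}
followed by Fubini, the Gaussian Fourier transform $\mathcal{F}[e^{-\|\cdot\|^2/(4t)}](p) = (4\pi t)^{d/2}e^{-t\|p\|^2}$, and a reduction of the resulting $t$-integral to a Gamma function with argument $(d+1)/2$. The scaling $k_c(x) = c^d e^{-c\|x\|}$ then transforms under $\mathcal{F}$ with prefactor chosen so that $\widehat{k_c}(p) \propto (1+\|p\|^2/c^2)^{-(d+1)/2}$; plugging this into Bochner's formula gives the first equivalence $\langle f\rangle_{\cH_c} \sim \int |\mathcal{F}f|^2(1+\|p\|^2/c^2)^{(d+1)/2}\,dp$.

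The oddness of $d$ enters only at the last step: $(d+1)/2\in\mathbb{Z}_{\geq 0}$, so the binomial theorem gives the finite expansion $(1+\|p\|^2/c^2)^{(d+1)/2} = \sum_{i=0}^{(d+1)/2} \binom{(d+1)/2}{i} c^{-2i} \|p\|^{2i}$, and integrating term-by-term against $|\mathcal{F}f|^2$ produces the second equivalence. The auxiliary identity $\int |\mathcal{F}f|^2\|p\|^{2i}\,dp = C_{d,i}\|D^if\|^2_{L^2(\mathbb{R}^d)}$ follows from Plancherel combined with $\mathcal{F}[\partial^\alpha f] = (ip)^\alpha \mathcal{F}f$, expanding $\|p\|^{2i} = (\sum_j p_j^2)^i$ multinomially and summing $L^2$-norms of mixed partials; the case $i=0$ is just Plancherel and gives $\langle f\rangle_0 = \|f\|^2_{L^2}$. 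The main obstacle is the Fourier computation in the second paragraph, and the subordination route I propose sidesteps the half-integer-order Bessel functions that would arise in a direct radial Fourier--Bessel reduction, which is presumably why the even-$d$ case requires more delicate harmonic-analytic tools as noted in Remark~1.
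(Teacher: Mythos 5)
Your proposal is correct and follows essentially the same route as the paper: the Fourier-domain characterization of the RKHS norm for the translation-invariant kernel, the subordination identity $e^{-\|x\|}=\frac{1}{\sqrt{\pi}}\int_0^\infty t^{-1/2}e^{-t}e^{-\|x\|^2/(4t)}\,dt$ to compute $\widehat{k_c}(p)\propto(1+\|p\|^2/c^2)^{-(d+1)/2}$, and the binomial expansion made finite by the oddness of $d$. The only cosmetic difference is that you invoke the standard Bochner/Fourier formula directly, while the paper re-derives it via the eigenfunctions $e^{ip\cdot x}$ of the integral operator $T_K$.
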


\begin{proof}
Consider the integral operator
\begin{equation}
	T_Kf(x)=\int_y K(x,y)f(y)dy.
\end{equation}
We have
\begin{equation}
	\langle f,g\rangle_{\cH_c}=\langle f,T^{-1}_Kg\rangle_{L^2(\mathbb{R}^d)}.
\end{equation}
An eigenspace-decomposition of $T_K$ immediately gives the form of the inner product in the RKHS. Since $K_c(x,y)=k(x-y)$ with $k(x)=c^de^{-c\|x\|}$, it is easy to verify that the family $\{h_p(x)=e^{i p\cdot x}\}_{p\in \mathbb{R}^d}$ are eigenfunctions of $T_K$:
\begin{equation}
	T_Kh_p(x)=\int_y k(x-y)e^{ip\cdot y}dy=\lambda(p)h_p(x)
\end{equation}
where 
\begin{equation}
	\lambda(p)=\int_{y}k(x-y)e^{ip\cdot (y-x)}dy=\int_{x}k(x)e^{-ip\cdot x}dx.
\end{equation}
Therefore, the inner product of RKHS can be written as
\begin{equation}
	\langle f,g\rangle_{\cH_c}= \int_{x,p,y}\frac{1}{\lambda(p)^{-1}}f(x)^*h_p(x)h_p(y)^*g(y)dxdpdy
\end{equation}
which can be further rewritten as:
\begin{equation}
	\langle f,g\rangle_{\cH_c}=\int_{p}\frac{1}{\lambda(p)^{-1}}\mathcal{F}f(p)^*\mathcal{F}g(p)dp.
\end{equation}

Now for $\lambda(p)$, we have
\begin{equation}
	\lambda= \mathcal{F} k.
\end{equation}

In fact, $\lambda(p)$ can be explicitly computed (see e.g. \citep[Thm 1.4]{stein1971introduction}):
\begin{equation}
	\begin{split}
		\lambda(p)&=c^d\int_{\mathbb{R}^d} e^{-c\|x\|}e^{-ipx}dx\\
		&=\int_{\mathbb{R}^d} e^{-\|x\|}e^{-ipx/c}dx\\
		&=\int_{\mathbb{R}^d} \left(\frac{1}{\sqrt{\pi}}\int_0^\infty \frac{e^{- \eta}}{\sqrt{\eta}}e^{- \|x\|^2/4 \eta }d \eta\right)e^{-ipx/c}dx\\
		&=\frac{1}{\sqrt{\pi}}\int_0^\infty  \frac{e^{- \eta}}{\sqrt{\eta}}\left( \int_{\mathbb{R}^d} e^{- \|x\|^2/4 \eta }e^{-ipx/c}dx\right) d\eta\\
		&=\frac{1}{\sqrt{\pi}}\int_0^\infty  \frac{e^{- \eta}}{\sqrt{\eta}} (4 \pi \eta)^{d/2}e^{- \eta \|p\|^2/c^2} d\eta\\
		&=\frac{2^d \pi^{(d-1)/2}\Gamma(\frac{d+1}{2})}{(1+\|p\|^2/c^2)^{(d+1)/2}}.
	\end{split}
\end{equation}
Then
\begin{equation}
	\lambda(p)^{-1}=\frac{(1+\|p\|^2/c^2)^{(d+1)/2}}{2^d \pi^{(d-1)/2}\Gamma(\frac{d+1}{2})}=
	\frac{\sum\limits_{i=0}^{(d+1)/2}\binom{\frac{d+1}{2}}{i}\|p\|^{2i}/c^{2i}}{2^d \pi^{(d-1)/2}\Gamma(\frac{d+1}{2})}
\end{equation}
and
\begin{align}
		\int_{p}\frac{1}{\lambda(p)^{-1}}\mathcal{F}f(p)^*\mathcal{F}g(p)dp
		&=\int_{p}\sum\limits_{i=0}^{(d+1)/2}\frac{\binom{\frac{d+1}{2}}{i}\|p\|^{2i}/c^{2i}}{2^d \pi^{(d-1)/2}\Gamma(\frac{d+1}{2})}\mathcal{F}f(p)^*\mathcal{F}g(p)dp\\
		&=\sum\limits_{i=0}^{(d+1)/2}\frac{\binom{\frac{d+1}{2}}{i}/c^{2i}}{2^d \pi^{(d-1)/2}\Gamma(\frac{d+1}{2})}\int_{p}\|p\|^{2i}\mathcal{F}f(p)^*\mathcal{F}g(p)dp,
\end{align}
implying the result.

\end{proof}

\section{Bounds of Average Separation}

\subsection{Main Claims}


\begin{prop}
\label{ri1}
	There are constants $C_1,C_2$ depending on $d$, such that with probability $1- O(\frac{1}{\sqrt{n}})$, the following holds for all $-1\le k\le d$:
\begin{equation}
	C_1 n^{-\frac{k}{d}}\le \frac{1}{n}\sum\limits_{i=1}^n r_i^k\le C_2 n^{-\frac{k}{d}}.
\end{equation}
\end{prop}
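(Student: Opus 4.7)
The plan is to show $\tfrac{1}{n}\sum_i r_i^k \asymp n^{-k/d}$ by first computing the per-sample expectation $\mathbb{E}[r_i^k]$ exactly, then concentrating the empirical average around this expectation via a variance estimate, and finally upgrading to the uniform-in-$k$ statement by discretization.

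\textbf{Expectation.} Conditioning on $X_i = x$ and using independence of the remaining $n-1$ samples gives
\begin{equation*}
	\mathbb{P}(r_i \ge r \mid X_i = x) = (1 - \mathcal{P}(B(x, r) \cap \Omega))^{n-1}
\end{equation*}
for $r \le \mathrm{dist}(x, \partial \Omega)$, and zero otherwise. Under $c_\rho \le \rho \le C_\rho$ the ball mass satisfies $\mathcal{P}(B(x, r) \cap \Omega) = \Theta(r^d)$ for $r \lesssim \mathrm{dist}(x, \partial\Omega)$, so the tail is sandwiched by $\exp(-\Theta(n r^d))$. Integrating against $|k| r^{k-1} dr$ and applying the change of variables $u = n r^d$ reduces the expectation to an incomplete Gamma integral and yields $\mathbb{E}[r_i^k \mid X_i = x] \asymp n^{-k/d}$ for any $-d < k \le d$ and $x$ bounded away from $\partial \Omega$; the $O(n^{-1/d})$ boundary layer contributes only lower-order corrections after averaging over $X_i$.

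\textbf{Concentration and uniformity.} I would next control the variance
\begin{equation*}
	\mathrm{Var}\!\left(\tfrac{1}{n}\sum_i r_i^k\right) = \tfrac{1}{n}\mathrm{Var}(r_1^k) + \tfrac{n-1}{n}\mathrm{Cov}(r_1^k, r_2^k).
\end{equation*}
The diagonal term is $O(\mathbb{E}[r_1^{2k}]/n) = O(n^{-1-2k/d})$, smaller than the squared mean by a factor $1/n$. For the covariance I would split on the event $E_L = \{\|X_1 - X_2\| \ge L n^{-1/d}\}$ for a large constant $L$: on $E_L$ the events $\{r_i \ge r\}$ at radii $r \ll L n^{-1/d}$ depend on disjoint slices of the sample, so $r_1, r_2$ are approximately conditionally independent given $(X_1, X_2)$; on $E_L^c$, which has probability $O(L^d/n)$, Cauchy--Schwarz gives a contribution of order $n^{-2k/d - 1/2}$. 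The resulting variance bound, together with Chebyshev's inequality, yields $\tfrac{1}{n}\sum_i r_i^k = \mathbb{E}[r_1^k](1 + o(1))$ with probability $1 - O(1/\sqrt n)$ at any fixed $k$. Uniformity in $k \in [-1, d]$ then follows from a discretization argument: on a high-probability event where all $r_i \in [n^{-C}, 2]$, the map $k \mapsto \tfrac{1}{n}\sum_i r_i^k$ has a $(\log n)$-Lipschitz modulus relative to its value, so a grid of $O(\log n)$ values suffices, and a union bound keeps the total failure probability at $O(1/\sqrt n)$.

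\textbf{Main obstacle.} The hard case is the negative endpoint $k = -1$: $r_i^{-1}$ has heavy right tails due to potential near-coincidences of sample points, and for small $d$ the second moment $\mathbb{E}[r_1^{-2}]$ needed by Chebyshev can diverge (in $d = 1$ even $\mathbb{E}[r_1^{-1}]$ is infinite). I would overcome this by truncation: a union bound over $\binom{n}{2}$ pairs, combined with the density upper bound, shows that with probability $\ge 1 - O(1/\sqrt n)$ the minimum pairwise distance exceeds some explicit threshold $\tau_n$ (tuned so that the failure probability is $O(1/\sqrt n)$). On this event one has $r_i^{-1} \le \tau_n^{-1}$ uniformly, so the truncated variable $\min(r_i^{-1}, \tau_n^{-1})$ has finite second moment of the correct order and agrees with $r_i^{-1}$ almost surely; the concentration argument of the previous paragraph then applies verbatim to the truncation, delivering the advertised two-sided bound.
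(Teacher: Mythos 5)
Your plan is a genuinely different route from the paper's, and it is worth comparing the two carefully, because the paper achieves the uniformity in $k$ essentially for free.

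The paper's proof does not run a per-$k$ concentration argument at all. It first establishes the two endpoint upper bounds
\begin{equation}
	\frac{1}{n}\sum_{i=1}^n r_i^d \lesssim n^{-1}, \qquad \frac{1}{n}\sum_{i=1}^n r_i^{-1} \lesssim n^{1/d},
\end{equation}
and then observes that the power-mean inequality $M_{-1}\le M_k\le M_d$ (with $M_p=(\frac{1}{n}\sum r_i^p)^{1/p}$) deterministically converts these two bounds into two-sided bounds for \emph{every} $k\in[-1,d]$: for $k\ge 0$, $(\frac{1}{n}\sum r_i^{-1})^{-k}\le \frac{1}{n}\sum r_i^k\le (\frac{1}{n}\sum r_i^d)^{k/d}$, and the inequalities reverse for $k<0$. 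So only two quantities ever need probabilistic control, the uniformity in $k$ is automatic, and there is no discretization nor any union-bound loss of a $\log n$ factor (which your grid argument does incur, and which would push the failure probability to $O(\log n/\sqrt n)$ unless you sharpen the per-$k$ Chebyshev bound). Moreover, the $k=d$ endpoint needs no probability at all: since the balls $B(X_i,r_i/2)$ are pairwise disjoint and contained in a neighborhood of $\Omega$, $\sum_i r_i^d \lesssim \sum_i m(B(X_i,r_i/2))\le m(\Omega)\lesssim 1$ deterministically. Your approach treats $k=d$ as another case for Chebyshev, which is wasted effort.

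Where your proposal and the paper do overlap is the $k=-1$ endpoint, and you correctly flag this as the crux. But your two devices for handling it differ from the paper's and have loose ends. First, your covariance split on $E_L=\{\|X_1-X_2\|\ge Ln^{-1/d}\}$ and the Cauchy--Schwarz bound on $E_L^c$ require $\mathbb{E}[r_1^{-2k}]$, i.e.\ a \emph{fourth} moment of $r_1^{-1}$ when $k=-1$; that is finite only for $d>4$, so your sketch as written does not cover $d=3$. The paper instead decouples by introducing $\tilde r_i$, the nearest-neighbor distance computed with $X_j$ deleted, which is exactly independent of $X_j$; it then bounds $\mathrm{Cov}(\tilde r_i^{-1},\tilde r_j^{-1})$ by an explicit double integral of tail differences and separately controls $\mathbb{E}|\tilde r_i^{-1}-r_i^{-1}|^2$, needing only $\mathbb{E}[r_i^{-2}]<\infty$, which holds for $d\ge 3$. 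Second, your truncation via a union bound over $\binom n2$ pairs is a reasonable fallback but is not what the paper does, and it also would not rescue $d=1$ (the truncated second moment of $r_i^{-1}$ is then too large for Chebyshev to beat the squared mean). In short: the skeleton of your argument can be made to work for $d\ge 5$, the $E_L$ idea is the moral analogue of the paper's $\tilde r_i$ decoupling, but you are missing the power-mean reduction that collapses the whole uniformity-in-$k$ problem to the two endpoints, one of which is trivial.
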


Now, since we have the following inequality 
\begin{equation}
	\left(\frac{1}{n}\sum\limits_{i=1}^n r_i^{-1}\right)^{-k}\le\frac{1}{n}\sum\limits_{i=1}^n r_i^k\le \left(\frac{1}{n}\sum\limits_{i=1}^n r_i^d\right)^{\frac{k}{d}}
\end{equation}
for all $-1\le k\le d$, we need only to prove that with high probability
\begin{equation}
	\frac{1}{n}\sum\limits_{i=1}^n r_i^d\lesssim n^{-1}
\end{equation}
and
\begin{equation}
	\frac{1}{n}\sum\limits_{i=1}^n r_i^{-1}\lesssim n^{\frac{1}{d}}.
\end{equation}

\begin{prop}
\label{ri2}
	For any $0< \alpha< 1$, there is constant $C_1',C_2'$ depending on $\alpha, d$, such that  with probability $1- O(\frac{1}{\sqrt{n}})$, we have
	\begin{equation}
		|\{i:C_1'/\sqrt[d]{n}\le r_i\le C_2'/\sqrt[d]{n}\}|\ge \alpha n
	\end{equation}
\end{prop}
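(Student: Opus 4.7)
The plan is to deduce Proposition~\ref{ri2} directly from Proposition~\ref{ri1} by applying two Markov-type inequalities to the empirical distribution of $\{r_i\}_{i=1}^n$. First, I would condition on the single event of probability $1 - O(1/\sqrt{n})$ provided by Proposition~\ref{ri1}, specialized to the two endpoints $k = d$ and $k = -1$. This yields, on one event,
\[
	\sum_{i=1}^n r_i^d \;\le\; C_2, \qquad \sum_{i=1}^n r_i^{-1} \;\le\; C_2\, n^{(d+1)/d}.
\]
Since Proposition~\ref{ri1} already delivers the $1 - O(1/\sqrt{n})$ probability required by Proposition~\ref{ri2}, no further concentration work is needed in the new proof.

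Next, to count indices with unusually large $r_i$, I would apply Markov to $r_i^d$:
\[
	\bigl|\{i : r_i > C_2'/\sqrt[d]{n}\}\bigr|
	\;\le\; \frac{\sum_i r_i^d}{(C_2')^d/n}
	\;\le\; \frac{C_2\, n}{(C_2')^d},
\]
which is at most $\tfrac{1-\alpha}{2}n$ for $C_2' = C_2'(\alpha, d)$ chosen sufficiently large. Symmetrically, to count indices with unusually small $r_i$, I would apply Markov to $r_i^{-1}$:
\[
	\bigl|\{i : r_i < C_1'/\sqrt[d]{n}\}\bigr|
	\;\le\; \frac{\sum_i r_i^{-1}}{\sqrt[d]{n}/C_1'}
	\;\le\; C_1'\, C_2\, n,
\]
which is at most $\tfrac{1-\alpha}{2}n$ for $C_1' = C_1'(\alpha, d)$ chosen sufficiently small.

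Subtracting the two bad sets from $[n]$ leaves at least $n - \tfrac{1-\alpha}{2}n - \tfrac{1-\alpha}{2}n = \alpha n$ indices $i$ with $r_i \in [C_1'/\sqrt[d]{n},\, C_2'/\sqrt[d]{n}]$, on the same event of probability $1 - O(1/\sqrt{n})$. The main obstacle does not arise here: Proposition~\ref{ri1} does the heavy lifting by controlling both a positive-order moment (the $d$-th power, which caps the typical scale from above) and a negative-order moment (which forbids too many $r_i$ from clustering near zero), and these are precisely the two handles needed for a two-sided Markov count. The genuinely delicate work is concentrated upstream in Proposition~\ref{ri1} itself, where one must estimate empirical moments of nearest-neighbor distances together with distance-to-boundary control; once those two moments are available, Proposition~\ref{ri2} is essentially bookkeeping.
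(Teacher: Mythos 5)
Your proposal is correct and is essentially the same argument as the paper's: both deduce Proposition~\ref{ri2} from Proposition~\ref{ri1} applied at the two endpoints $k=d$ and $k=-1$, and then count the indices where $r_i$ is too small or too large. You phrase the two-sided count via a Markov-type inequality on the deterministic sums $\sum_i r_i^d$ and $\sum_i r_i^{-1}$, while the paper phrases the identical count via a sorting/order-statistics argument on $\{r_i\}$; the two formulations are interchangeable, and your Markov phrasing is cleaner and sidesteps some inequality-direction typos in the paper's version.
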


\begin{proof}
	With probability at least $1-O(\frac{1}{\sqrt{n}})$ for all $-1\le k\le d$, for constants $C_1,C_2$,
	\begin{equation}
		C_1 n^{-\frac{k}{d}}\le \frac{1}{n}\sum\limits_{i=1}^n r_i^k\le C_2 n^{-\frac{k}{d}}
	\end{equation}

Let $\beta=1- \frac{1}{2}(1- \alpha)=\frac{1+ \alpha}{2}$.

Let $I_1$ be a subset of $[n]$ of size $\text{ceil}(\beta n)$ such that $\forall i\in I_1,j\in [n]\setminus I_1, r_i\ge r_j$. Let $r=\min_{i\in I_1}r_i$. Then
\begin{equation}
	C_2\sqrt[d]{n}\ge \frac{1}{n}\sum_i r_i^{-1}\ge \frac{1}{n}\sum_{i\in I_1}r_i^{-1}\ge \frac{1}{n}\frac{\beta n}{r}= \frac{\beta}{r}.
\end{equation}

Then $r\ge C_2/(\beta\sqrt[d]{n})$. Then take $C_1'=C_2/ \beta$. So for any $i\in I_1, r_i\ge C_1'/\sqrt[d]{n}$. Similarly, there is a subset $I_2$ of $[n]$ of size $\text{ceil}(\beta n)$ such that $\forall i\in I_2, r_i\ge C_2'/\sqrt[d]{n}$. Note that $|I_1\cap I_2|\ge \alpha n$, concluding the proof.

\end{proof}
\subsection{Average of \texorpdfstring{$r_i^{d}$}{Lg}}

The following is always true:
\begin{equation}
	\sum\limits_{i=1}^n r_i^d\lesssim \sum\limits_{i=1}^n m(B(X_i,\frac{1}{2}r_i))\le m(\Omega)\lesssim 1
\end{equation}
Then the result follows.

\subsection{Average of \texorpdfstring{$r_i^{-1}$}{Lg}}
\subsubsection{Strategy}
We shall use Chebyshev's inequality to bound average of $r_i^{-1}$, and thus we need to estimate $\text{Cov}(r_i^{-1},r_j^{-1})$. This step is not direct because $r_i,r_j$ are not independent: both depend on $X_i$ and $X_j$.

We define $\tilde r_i,\tilde r_j$ for any fixed pair of $(i,j)$ such that
\begin{itemize}
	\item $\tilde r_i=r_i,\tilde r_j=r_j$ with high probability
	\item $\tilde r_i$ is independent w.r.t $X_j$, $\tilde r_j$ is independent w.r.t $X_i$
\end{itemize}

We will then show that $Cov(\tilde r_i,\tilde r_j)$ is small and that the difference between $Cov(r_i,r_j)$ and $Cov(\tilde r_i,\tilde r_j)$ is small. Applying Chebyshev's inequality then yields the result.
\subsubsection{Upper bound for \texorpdfstring{$\mathbb{E}[r_i^{-1}]$}{Lg} and \texorpdfstring{$\mathbb{E}[r_i^{-2}]$}{Lg}}
\begin{equation}
	\begin{split}
		\mathbb{P}(r_i<r)=1- m_\mathcal{P}(B(X_i,r)^c)^n\le n m_\mathcal{P}(B(X_i,r))\lesssim nr^d.
	\end{split}
\end{equation}

Then

\begin{equation}
	\begin{split}
		\mathbb{E}r_i^{-1}&=\mathbb{E}\int_0^\infty \mathbb{I}(r_i^{-1}>s)ds\\
		&=\int_0^\infty \mathbb{E}\mathbb{I}(r_i^{-1}>s)ds\\
		&=\int_0^\infty \mathbb{P}(r_i^{-1}>s)ds\\
		&\le\int_0^\infty \min(1, C_dns^{-d})ds\\
		&= s_0+ C_dns_0^{1-d}/(d-1)\text{ where }C_dns_0^{-d}=1\\
		&=\frac{d}{d-1}s_0\\
		&=\frac{d}{d-1}\sqrt[d]{C_dn}
	\end{split}
\end{equation}

and

\begin{equation}
	\begin{split}
		\frac{1}{2}\mathbb{E}r_i^{-2}&=\mathbb{E}\int_0^\infty s\mathbb{I}(r_i^{-1}>s)ds\\
		&=\int_0^\infty s\mathbb{E}\mathbb{I}(r_i^{-1}>s)ds\\
		&=\int_0^\infty s\mathbb{P}(r_i^{-1}>s)ds\\
		&\le\int_0^\infty s\min(1, C_dns^{-d})ds\\
		&= \frac{1}{2}s_0^2+ C_dns_0^{2-d}/(d-2)\text{ where }C_dns_0^{-d}=1\\
		&=\left(\frac{1}{2}+\frac{1}{d-2}\right)s_0^2\\
		&=\left(\frac{1}{2}+\frac{1}{d-2}\right)(\sqrt[d]{C_dn})^2.
	\end{split}
\end{equation}

Hence,
\begin{equation}
	\mathbb{E}r_i^{-2}\le \frac{d}{d-2}(C_dn)^{\frac{2}{d}}
\end{equation}
\subsubsection{Estimate of \texorpdfstring{$\text{Cov}(\frac{1}{\tilde r_i},\frac{1}{\tilde r_j})$}{Lg}}

Define 

\begin{equation}
	\tilde r_i:=\min(\min\limits_{k\neq i,j}|X_k-X_i|,\text{dist}(\partial \Omega,X_i))
\end{equation}
and
\begin{equation}
	\tilde r_j:=\min(\min\limits_{k\neq i,j}|X_k-X_j|,\text{dist}(\partial \Omega,X_j))
\end{equation}

Then
\begin{equation}
	r_i=\min(\tilde r_i,|X_i-X_j|),r_j=\min(\tilde r_j,|X_i-X_j|)
\end{equation}

and $\tilde r_j$ is independent of $X_i$ and $\tilde r_i$ is independent of $X_j$.

\begin{align*}
	&\mathbb{E}[\frac{1}{\tilde r_i\tilde r_j}]-\mathbb{E}[\frac{1}{\tilde r_i}]\mathbb{E}[\frac{1}{\tilde r_j}]\\
	&=\mathbb{E}_{X_i,X_j}[\mathbb{E}[\frac{1}{\tilde r_i\tilde r_j}|X_i,X_j]]-\mathbb{E}_{X_i}[\mathbb{E}[\frac{1}{\tilde r_i}|X_i]]\mathbb{E}_{X_j}[\mathbb{E}[\frac{1}{\tilde r_j}|X_j]]\\
	&=\mathbb{E}_{X_i,X_j}[\mathbb{E}[\frac{1}{\tilde r_i\tilde r_j}|X_i,X_j]]-\mathbb{E}_{X_i}\left[\mathbb{E}\left[\frac{1}{\tilde r_i}|X_i\right]\mathbb{E}_{X_j}\left[\mathbb{E}[\frac{1}{\tilde r_j}|X_j]\right]\right]\\
	&=\mathbb{E}_{X_i,X_j}\left[\mathbb{E}[\frac{1}{\tilde r_i\tilde r_j}|X_i,X_j]\right]-\mathbb{E}_{X_i,X_j}\left[\mathbb{E}\left[\frac{1}{\tilde r_i}|X_i\right]\mathbb{E}\left[\frac{1}{\tilde r_j}|X_j\right]\right]\text{ (indep. between $X_i$ and $X_j$)}\\
		&=\mathbb{E}_{X_i,X_j}\left[\mathbb{E}\left[\frac{1}{\tilde r_i \tilde r_j}\Big|X_i,X_j\right]-\mathbb{E}\left[\frac{1}{\tilde r_i}\Big| X_i,X_j\right]\mathbb{E}\left[\frac{1}{\tilde r_j}\Big| X_i,X_j\right]\right]
\end{align*}
where we used independence between $\tilde r_i$ and $X_j$ and between $\tilde r_j$ and $X_i$. The last expression can be written as
\begin{align*}
		&=\mathbb{E}_{X_i,X_j}\Bigg[\mathbb{E}\left[\int_0^\infty ds \mathbb{I}(\tilde r_i^{-1}>s)\int_0^\infty dt \mathbb{I}(\tilde r_j^{-1}>t)\Big| X_i,X_j\right]\\
		&-\left(\mathbb{E}\int_0^\infty ds \mathbb{I}(\tilde r_i^{-1}>s)\Big| X_i,X_j\right)\left(\mathbb{E}\int_0^\infty dt \mathbb{I}(\tilde r_j^{-1}>t)\Big| X_i,X_j\right)\Bigg]\\
		&=\mathbb{E}_{X_i,X_j}\Bigg[\int_0^\infty ds\int_0^\infty dt\mathbb{E}\left[\mathbb{I}(\tilde r_i^{-1}>s,\tilde r_j^{-1}>t)\Big| X_i,X_j\right]\\
		&-\int_0^\infty \mathbb{E}\left[\mathbb{I}(\tilde r_i^{-1}>s)\Big| X_i,X_j\right]ds\int_0^\infty\mathbb{E}\left[\mathbb{I}(\tilde r_j^{-1}>t)\Big| X_i,X_j\right] dt \Bigg]\\
		&=\mathbb{E}_{X_i,X_j}\left[\int_0^\infty\int_0^\infty \Big(\mathbb{P}\left[\tilde r_i^{-1}>s,\tilde r_j^{-1}>t\Big|X_i,X_j\right]-
		\mathbb{P}\left[\tilde r_i^{-1}>s\Big|X_i,X_j\right]  \mathbb{P}\left[\tilde r_j^{-1}>t\Big| X_i,X_j\right]  \Big)dsdt\right].
\end{align*}

Now,
\begin{equation}
	\begin{split}
		&\mathbb{P}[r_i^{-1}>s,r_j^{-1}>t|X_i,X_j]]\\
&=1-\mathbb{P}[r_i^{-1}<s|X_i,X_j]]-\mathbb{P}[r_j^{-1}<t|X_i,X_j]]+\mathbb{P}[r_i^{-1}<s,r_j^{-1}<t|X_i,X_j]\\
&\mathbb{P}[r_i^{-1}>s|X_i,X_j]\mathbb{P}[r_j^{-1}>t|X_i,X_j]\\
		&=1-\mathbb{P}[r_i^{-1}<s|X_i,X_j]-\mathbb{P}[r_j^{-1}<t|X_i,X_j]+\mathbb{P}[r_i^{-1}<s|X_i,X_j]\mathbb{P}[r_j^{-1}<t|X_i,X_j]
	\end{split}
\end{equation}

Then

\begin{equation}
	\begin{split}
	&\mathbb{E}[\frac{1}{\tilde r_i\tilde r_j}]-\mathbb{E}[\frac{1}{\tilde r_i}]\mathbb{E}[\frac{1}{\tilde r_j}]\\
	&=\mathbb{E}_{X_i,X_j}\left[\int_0^\infty\int_0^\infty \Big(\mathbb{P}\left[\tilde r_i^{-1}>s,\tilde r_j^{-1}>t\Big|X_i,X_j\right]-
	\mathbb{P}\left[\tilde r_i^{-1}>s\Big|X_i,X_j\right]  \mathbb{P}\left[\tilde r_j^{-1}>t\Big| X_i,X_j\right]  \Big)dsdt\right]\\
	&=\mathbb{E}_{X_i,X_j}\left[\int_0^\infty\int_0^\infty \Big(
	\mathbb{P}\left[\tilde r_i^{-1}<s\Big|X_i,X_j\right]  \mathbb{P}\left[\tilde r_j^{-1}<t\Big| X_i,X_j\right]  -
	\mathbb{P}\left[\tilde r_i^{-1}<s,\tilde r_j^{-1}<t\Big|X_i,X_j\right]\Big)dsdt\right]\\
	&=\mathbb{E}_{X_i,X_j}\left[\int_0^\infty\int_0^\infty \Big(m_{\mathcal{P}}(B(X_i,s^{-1})^c)^{n-2}m_{\mathcal{P}}(B(X_j,t^{-1})^c)^{n-2} -
	m_{\mathcal{P}}((B(X_i,s^{-1})\cup B(X_j,t^{-1}))^c)^{n-2}\Big)dsdt\right]\\
	&=\mathbb{E}_{X_i,X_j}\left[\int_{R_0^{-1}}^\infty\int_{R_0^{-1}}^\infty \Big(m_{\mathcal{P}}(B(X_i,s^{-1})^c)^{n-2}m_{\mathcal{P}}(B(X_j,t^{-1})^c)^{n-2} -
	m_{\mathcal{P}}((B(X_i,s^{-1})\cup B(X_j,t^{-1}))^c)^{n-2}\Big)dsdt\right]\\
	\end{split}
\end{equation}
where $R_0= \text{diam}(\Omega)$ is a constant depending only on $d$.

When $s^{-1}+t^{-1}< |X_i-X_j|$, we have
\begin{equation}
	B(X_i,s^{-1})\cup B(X_j,t^{-1})=B(X_i,s^{-1})\sqcup B(X_j,t^{-1})
\end{equation}
where $\sqcup $ means disjoint union. Then

\begin{equation}
	\begin{split}
	&m_{\mathcal{P}}(B(X_i,s^{-1})^c)m_{\mathcal{P}}(B(X_j,t^{-1})^c) -
	m_{\mathcal{P}}((B(X_i,s^{-1})\cup B(X_j,t^{-1}))^c)\\
	&=m_{\mathcal{P}}(B(X_i,s^{-1})^c)m_{\mathcal{P}}(B(X_j,t^{-1})^c) -
	m_{\mathcal{P}}((B(X_i,s^{-1})\sqcup B(X_j,t^{-1}))^c)\\
	&=(1-m_\mathcal{P}(B(X_i,s^{-1})))(1-m_\mathcal{P}(B(X_j,t^{-1})))-(1-m_\mathcal{P}(B(X_i,s^{-1}))-m_\mathcal{P}(B(X_j,t^{-1})))\\
	&=m_\mathcal{P}(B(X_i,s^{-1}))m_\mathcal{P}(B(X_j,t^{-1}))\\
	&\ge 0
	\end{split}
\end{equation}

Since for $0\le x\le y\le 1$, $x^{n-2}-y^{n-2}\le (n-2)x^{n-3}(x-y)$, we have

\begin{equation}
	\begin{split}
		&0\le m_{\mathcal{P}}(B(X_i,s^{-1})^c)^{n-2}m_{\mathcal{P}}(B(X_j,t^{-1})^c)^{n-2} -
	m_{\mathcal{P}}((B(X_i,s^{-1})\cup B(X_j,t^{-1}))^c)^{n-2}\\
	&\le (n-3) m_{\mathcal{P}}(B(X_i,s^{-1})^c)^{n-3}m_{\mathcal{P}}(B(X_j,t^{-1})^c)^{n-3}m_\mathcal{P}(B(X_i,s^{-1}))m_\mathcal{P}(B(X_j,t^{-1}))\\\
	&\le (n-3)\left(\max(0,1- \frac{C_d}{s^{d}})\right)^{n-3}\left(\max(0,1- \frac{C_d}{t^{d}})\right)^{n-3}\frac{C_d'}{s^d}\frac{C_d'}{t^d}
	\end{split}
\end{equation}
where $C_d,C_d'$ are constants such that for any $B(x,r)\subset \Omega$
\begin{equation}
	C_dr^d\le m_\mathcal{P}(B(x,r))
	\le C'_dr^d .
\end{equation}

When $s^{-1}> \frac{\|X_i-X_j\|}{2}$, we have

\begin{equation}
	\begin{split}
		&m_{\mathcal{P}}(B(X_i,s^{-1})^c)^{n-2}m_{\mathcal{P}}(B(X_j,t^{-1})^c)^{n-2} -
	m_{\mathcal{P}}((B(X_i,s^{-1})\cup B(X_j,t^{-1}))^c)^{n-2}\\
	&\ge m_{\mathcal{P}}(B(X_i,s^{-1})^c)^{n-2}m_{\mathcal{P}}(B(X_j,t^{-1})^c)^{n-2} -
	m_{\mathcal{P}}((B(X_i,s^{-1}))^c)^{n-2}\\
	&\ge m_{\mathcal{P}}(B(X_i,s^{-1})^c)^{n-2}\cdot\min\{1,(n-2) m_\mathcal{P}(B(X_j,t^{-1}))\}\\
	&\ge -(\max(0,1-C_d s^{-d}))^{n-2}\min(1,(n-2)C_d' t^{-d})
	\end{split}
\end{equation}

and

\begin{equation}
	\begin{split}
		&m_{\mathcal{P}}(B(X_i,s^{-1})^c)^{n-2}m_{\mathcal{P}}(B(X_j,t^{-1})^c)^{n-2} -
	m_{\mathcal{P}}((B(X_i,s^{-1})\cup B(X_j,t^{-1}))^c)^{n-2}\\
	&\le m_{\mathcal{P}}(B(X_i,s^{-1})^c)^{n-2}m_{\mathcal{P}}(B(X_j,t^{-1})^c)^{n-2} -
	(1-m_{\mathcal{P}}(B(X_i,s^{-1}))-m_\mathcal{P}(B(X_j,t^{-1})))^{n-2}\\
	&\le (n-2)m_{\mathcal{P}}(B(X_i,s^{-1})^c)^{n-3}m_{\mathcal{P}}(B(X_j,t^{-1})^c)^{n-3}m_{\mathcal{P}}(B(X_i,s^{-1}))m_{\mathcal{P}}(B(X_j,t^{-1}))\\
	&\le (n-3)\left(\max(0,1- \frac{C_d}{s^{d}})\right)^{n-3}\left(\max(0,1- \frac{C_d}{t^{d}})\right)^{n-3}\frac{C_d'}{s^d}\frac{C_d'}{t^d}
	\end{split}
\end{equation}

Then

\begin{equation}
	\begin{split}
		&-(\max(0,1-C_d s^{-d}))^{n-2}\min(1,(n-2)C_d' t^{-d})\\
		&\le m_{\mathcal{P}}(B(X_i,s^{-1})^c)^{n-2}m_{\mathcal{P}}(B(X_j,t^{-1})^c)^{n-2} -
	m_{\mathcal{P}}((B(X_i,s^{-1})\cup B(X_j,t^{-1}))^c)^{n-2}\\
	&\le (n-3)\left(\max(0,1- \frac{C_d}{s^{d}})\right)^{n-3}\left(\max(0,1- \frac{C_d}{t^{d}})\right)^{n-3}\frac{C_d'}{s^d}\frac{C_d'}{t^d}
	\end{split}
\end{equation}

Similarly for $t^{-1}>  \frac{\|X_i-X_j\|}{2}$, we have

\begin{equation}
	\begin{split}
		&-(\max(0,1-C_d t^{-d}))^{n-2}\min(1,(n-2)C_d' s^{-d})\\
		&\le m_{\mathcal{P}}(B(X_i,s^{-1})^c)^{n-2}m_{\mathcal{P}}(B(X_j,t^{-1})^c)^{n-2} -
	m_{\mathcal{P}}((B(X_i,s^{-1})\cup B(X_j,t^{-1}))^c)^{n-2}\\
	&\le (n-3)\left(\max(0,1- \frac{C_d}{s^{d}})\right)^{n-3}\left(\max(0,1- \frac{C_d}{t^{d}})\right)^{n-3}\frac{C_d'}{s^d}\frac{C_d'}{t^d}
	\end{split}
\end{equation}

The upper bound are the same in all three cases, but the lower bounds are different.

\paragraph{
Upper bound for \texorpdfstring{$\text{Cov}(\tilde r_i^{-1},\tilde r_j^{-1})$}{s}}

We now put the above calculations together and estimate
\begin{equation}
	\begin{split}
	&\text{Cov}(\tilde r_i^{-1},\tilde r_j^{-1})\\
	&=\mathbb{E}[\frac{1}{\tilde r_i\tilde r_j}]-\mathbb{E}[\frac{1}{\tilde r_i}]\mathbb{E}[\frac{1}{\tilde r_j}]\\
	&=\mathbb{E}_{X_i,X_j}\left[\int_0^\infty\int_0^\infty \Big(m_{\mathcal{P}}(B(X_i,s^{-1})^c)^{n-2}m_{\mathcal{P}}(B(X_j,t^{-1})^c)^{n-2} -
	m_{\mathcal{P}}((B(X_i,s^{-1})\cup B(X_j,t^{-1}))^c)^{n-2}\Big)dsdt\right]\\
	&=\mathbb{E}_{X_i,X_j}
	\left[
	\int_{R_0^{-1}}^\infty\int_{R_0^{-1}}^\infty 
	\Big(m_{\mathcal{P}}(B(X_i,s^{-1})^c)^{n-2}m_{\mathcal{P}}(B(X_j,t^{-1})^c)^{n-2} -
	m_{\mathcal{P}}((B(X_i,s^{-1})\cup B(X_j,t^{-1}))^c)^{n-2}\Big)dsdt\right]\\
	&\le \mathbb{E}_{X_i,X_j}
	\int_{R_0^{-1}}^\infty\int_{R_0^{-1}}^\infty 
			 (n-3)\left(\max(0,1- \frac{C_d}{s^{d}})\right)^{n-3}\left(\max(0,1- \frac{C_d}{t^{d}})\right)^{n-3}\frac{C_d'}{s^d}\frac{C_d'}{t^d}
			 dsdt\\\
	&\le\mathbb{E}_{X_i,X_j}
	\frac{R_0^2}{4}
			\int_{R_0^{-1}}^\infty\int_{R_0^{-1}}^\infty 
			 (n-3)\left(\max(0,1- \frac{C_d}{s^{d}})\right)^{n-3}\left(\max(0,1- \frac{C_d}{t^{d}})\right)^{n-3}\frac{C_d'}{s^{d-1}}\frac{C_d'}{t^{d-1}}
			 dsdt\\
			 &\le\mathbb{E}_{X_i,X_j}\frac{R_0^2}{4}
			 \int_{0}^\infty\int_{0}^\infty
			 (n-3)\left(\max(0,1- \frac{C_d}{s^{d}})\right)^{n-3}\left(\max(0,1- \frac{C_d}{t^{d}})\right)^{n-3}\frac{C_d'}{s^{d-1}}\frac{C_d'}{t^{d-1}}
			 dsdt\\
			 &=\frac{R_0^2}{4}\frac{n-3}{d^2(n-2)^2}(C'_d/C_d)^2\\
			 &=O\left(\frac{1}{n}\right)
	\end{split}
\end{equation}
\paragraph{
Lower bound for \texorpdfstring{$\text{Cov}(\tilde r_i^{-1},\tilde r_j^{-1})$}{s}}

\begin{equation}
	\begin{split}
	&\mathbb{E}[\frac{1}{\tilde r_i\tilde r_j}]-\mathbb{E}[\frac{1}{\tilde r_i}]\mathbb{E}[\frac{1}{\tilde r_j}]\\
	&=\mathbb{E}_{X_i,X_j}\left[\int_0^\infty\int_0^\infty \Big(m_{\mathcal{P}}(B(X_i,s^{-1})^c)^{n-2}m_{\mathcal{P}}(B(X_j,t^{-1})^c)^{n-2} -
	m_{\mathcal{P}}((B(X_i,s^{-1})\cup B(X_j,t^{-1}))^c)^{n-2}\Big)dsdt\right]\\
	&=\mathbb{E}_{X_i,X_j}\left[\underbrace{
	\int_{\frac{2}{\|X_i-X_j\|}}^\infty\int_{\frac{2}{\|X_i-X_j\|}}^\infty \cdots dsdt}_A 
	+   \underbrace{\int_{0}^\infty\int^{\frac{2}{\|X_i-X_j\|}}_0 \cdots dsdt}_B
	+ \underbrace{\int^{\frac{2}{\|X_i-X_j\|}}_0\int_{0}^\infty\cdots dsdt}_C\right]\\
	\end{split}
\end{equation}

\begin{enumerate}[(a)]
	\item lower bound of $A$.

	\begin{equation}
	A\ge 0	
	\end{equation}
	\item lower bound of $B$.

	\begin{equation}
		\begin{split}
			B&\ge -\int^{\frac{2}{\|X_i-X_j\|}}_0\int_{0}^\infty
			(\max(0,1-C_d s^{-d}))^{n-2}\min(1,(n-2)C_d' t^{-d})
			dtds\\
			&\ge-\frac{2}{\|X_i-X_j\|}
			\left(\max\left\{0,1-C_d \left(\frac{\|X_i-X_j\|}{2}\right)^{d}\right\}\right)^{n-2}
			\int_{0}^\infty\min(1,(n-2)C_d t^{-d})dt\\
			&\ge-\frac{2}{\|X_i-X_j\|}
			\left(\max\left\{0,1-C_d \left(\frac{\|X_i-X_j\|}{2}\right)^{d}\right\}\right)^{n-2}
			\frac{1}{d-1}((n-2)C_d)^{\frac{1}{d}}.
		\end{split}
	\end{equation}

	Note that
	
	\begin{equation}
		\begin{split}
			&\mathbb{E}\left[
			\frac{2}{\|X_i-X_j\|}
			\left(\max\left\{0,1-C_d \left(\frac{\|X_i-X_j\|}{2}\right)^{d}\right\}\right)^{n-2}
			\Bigg|X_i\right]\\
			&= \int_0^{R_0} \frac{2}{r}
			\left(\max\left\{0,1-C_d \left(\frac{r}{2}\right)^{d}\right\}\right)^{n-2}
			d \mu_{\|X_i-X_j\||X_i}(r)\\
			&\lesssim
			\int_0^{R_0} \frac{2}{r}
			\left(\max\left\{0,1-C_d \left(\frac{r}{2}\right)^{d}\right\}\right)^{n-2}
			r^{d-1}dr\\
			&\lesssim
			\int_0^{R_0} 
			\left(\max\left\{0,1- \frac{R_0}{2^d} r^{d-1}\right\}\right)^{n-2}
			r^{d-2}dr\\
			&=
			\int_0^{R_0^{d-1}} 
			\left(\max\left\{0,1- \frac{R_0}{2^d} r^{d-1}\right\}\right)^{n-2}
			\frac{1}{d-1}d(r^{d-1})\\
			&\lesssim \frac{1}{n}
		\end{split}
	\end{equation}

	As a result,
	\begin{equation}
		\mathbb{E}_{X_i,X_j}B\gtrsim \frac{1}{n}
	\end{equation}
	\item Similarly for $C$, we have
	\begin{equation}
		\mathbb{E}_{X_i,X_j}C\gtrsim \frac{1}{n}
	\end{equation}
\end{enumerate}

Combining all the above inequalities, we have
\begin{equation}
	\mathbb{E}[\frac{1}{\tilde r_i\tilde r_j}]-\mathbb{E}[\frac{1}{\tilde r_i}]\mathbb{E}[\frac{1}{\tilde r_j}]\gtrsim \frac{1}{n}.
\end{equation}

\paragraph{Upper bound for \texorpdfstring{$|\text{Cov}(\frac{1}{\tilde r_i},\frac{1}{r_j})|$}{s}}
\begin{equation}
	|\text{Cov}(\frac{1}{\tilde r_i},\frac{1}{r_j})|=|\mathbb{E}[\frac{1}{\tilde r_i\tilde r_j}]-\mathbb{E}[\frac{1}{\tilde r_i}]\mathbb{E}[\frac{1}{\tilde r_j}]|
	\lesssim \frac{1}{n}.
\end{equation}
\subsubsection{Estimate for the difference between \texorpdfstring{$\text{Cov}(\frac{1}{\tilde r_i},\frac{1}{r_j})$}{s} and \texorpdfstring{$\text{Cov}(\frac{1}{\tilde r_i},\frac{1}{r_j})$}{s}}
\paragraph{Upper bound for \texorpdfstring{$\mathbb{E}|\tilde r_i^{-1}- r_i^{-1} |^2$}{s}}

We have
\begin{equation}
	|\tilde r_i^{-1}- r_i^{-1} |\le \frac{1}{\|X_i-X_j\|}\mathbb{I}\{\|X_i-X_j\|<\tilde r_i\}.
\end{equation}

Conditioned on $X_i$, $\|X_i-X_j\|$ and $\tilde r_i$ are, in fact, independent. Then
\begin{equation}
	\begin{split}
		\mathbb{E}[|\tilde r_i^{-1}- r_i^{-1} |^2|X_i,\tilde r_i]&\le \mathbb{E}[\frac{1}{\|X_i-X_j\|^2}\mathbb{I}\{\|X_i-X_j\|<\tilde r_i\}|X_i,\tilde r_i]\\
		&\lesssim \mathbb{E}[\tilde r_i^{d-2}|X_i,r_i]
	\end{split}
\end{equation}

Hence,
\begin{equation}
	\begin{split}
		\mathbb{E}[|\tilde r_i^{-1}- r_i^{-1} |^2 \le \mathbb{E}\tilde r_i^{d-2} \le \mathbb{E}[\tilde r_i^d]^\frac{d-2}{d} \lesssim n^{-\frac{d-2}{d}}
	\end{split}
\end{equation}

\subsubsection{Upper bound for \texorpdfstring{$\mathbb{E}|\tilde r_i^{-1}\tilde r_j^{-1}- r_i^{-1} r_j^{-1}|$}{s}}

\begin{equation}
	\begin{split}
		\mathbb{E}|\tilde r_i^{-1}\tilde r_j^{-1}- r_i^{-1} r_j^{-1}|&\le\mathbb{E}|\tilde r_i^{-1}\tilde r_j^{-1}- \tilde r_i^{-1} r_j^{-1}|+\mathbb{E}|\tilde r_i^{-1} r_j^{-1}- r_i^{-1} r_j^{-1}|\\
		&\le\sqrt{\mathbb{E}[\tilde r_i^{-2}]}\sqrt{\mathbb{E}[|\tilde r_j^{-1}- r_j^{-1}|^2]}+
		\sqrt{\mathbb{E}[r_j^{-2}]}\sqrt{\mathbb{E}[|\tilde r_i^{-1}- r_i^{-1}|^2]}\\
		&\le\sqrt{\mathbb{E}[r_i^{-2}]}\sqrt{\mathbb{E}[|\tilde r_j^{-1}- r_j^{-1}|^2]}+
		\sqrt{\mathbb{E}[r_j^{-2}]}\sqrt{\mathbb{E}[|\tilde r_i^{-1}- r_i^{-1}|^2]}\\
		&\lesssim \sqrt{n^{2/d}}\sqrt{n^{- \frac{d-2}{d}}}\\
		&\le n^{-\frac{d-4}{2d}}
	\end{split}
\end{equation}

\subsubsection{Upper bound for \texorpdfstring{$|\mathbb{E}[\tilde r_i^{-1}]\mathbb{E}[\tilde r_j^{-1}]-\mathbb{E}[r_i^{-1}]\mathbb{E}[r_j^{-1}]|$}{s}}

First,

\begin{equation}
	\|\mathbb{E}[\tilde r_i^{-1}]-\mathbb{E}[r_i^{-1}]\|\le\sqrt{\mathbb{E}[(\tilde r_i^{-1}-r_i^{-1})^2]}\lesssim n^{-\frac{d-2}{2d}}
\end{equation}
and
\begin{equation}
	\mathbb{E}\tilde r_i^{-1}\le \mathbb{E}r_i^{-1}\lesssim n^{\frac{1}{d}}.
\end{equation}

Then

\begin{equation}
	\begin{split}
		&|\mathbb{E}[\tilde r_i^{-1}]\mathbb{E}[\tilde r_j^{-1}]-\mathbb{E}[r_i^{-1}]\mathbb{E}[r_j^{-1}]|\\
		&=|\mathbb{E}[\tilde r_i^{-1}]\mathbb{E}[\tilde r_j^{-1}]-\mathbb{E}[\tilde r_i^{-1}]\mathbb{E}[r_j^{-1}]|+|\mathbb{E}[\tilde r_i^{-1}]\mathbb{E}[r_j^{-1}]-\mathbb{E}[r_i^{-1}]\mathbb{E}[r_j^{-1}]|\\
		&=\mathbb{E}[\tilde r_i^{-1}]|\mathbb{E}[\tilde r_j^{-1}]-\mathbb{E}[r_j^{-1}]|+
		\mathbb{E}[r_j^{-1}]|\mathbb{E}[\tilde r_i^{-1}]-\mathbb{E}[r_i^{-1}]|\\
		&\lesssim n^{-\frac{d-4}{2d}}.
	\end{split}
\end{equation}

\subsubsection{Upper bound for the difference between  \texorpdfstring{$\text{Cov}(\frac{1}{\tilde r_i},\frac{1}{r_j})$ and $\text{Cov}(\frac{1}{\tilde r_i},\frac{1}{r_j})$}{sdf}}

\begin{equation}
	\begin{split}
		&|\text{Cov}(\frac{1}{\tilde r_i},\frac{1}{r_j})-\text{Cov}(\frac{1}{\tilde r_i},\frac{1}{r_j})|\\
		&=|\mathbb{E}[\tilde r_i^{-1}]\mathbb{E}[\tilde r_j^{-1}]-\mathbb{E}[r_i^{-1}]\mathbb{E}[r_j^{-1}]
		+\mathbb{E}[\tilde r_i^{-1}]\mathbb{E}[\tilde r_j^{-1}]-\mathbb{E}[r_i^{-1}]\mathbb{E}[r_j^{-1}]|\\
		&\le|\mathbb{E}[\tilde r_i^{-1}]\mathbb{E}[\tilde r_j^{-1}]-\mathbb{E}[r_i^{-1}]\mathbb{E}[r_j^{-1}]|
		+|\mathbb{E}[\tilde r_i^{-1}]\mathbb{E}[\tilde r_j^{-1}]-\mathbb{E}[r_i^{-1}]\mathbb{E}[r_j^{-1}]|\\
	&\lesssim n^{- \frac{d-4}{2d}}.
	\end{split}
\end{equation}
\subsubsection{Estimate of \texorpdfstring{$\text{Cov}(\frac{1}{r_i},\frac{1}{r_j})$}{sdf}}

\begin{equation}
	\text{Cov}(\frac{1}{r_i},\frac{1}{r_j})\lesssim \text{Cov}(\frac{1}{\tilde r_i},\frac{1}{\tilde r_j}) +n^{-\frac{d-4}{2d}}\lesssim  n^{-\frac{d-4}{2d}}
\end{equation}

\subsubsection{Upper bound of Var\texorpdfstring{$(\frac{1}{n}\sum\limits_{i=1}^nr_i^{-1})$}{sdfsdf}}

\begin{equation}
	\begin{split}
		\text{Var$(\frac{1}{n}\sum\limits_{i=1}^nr_i^{-1})$}
		&\le \frac{1}{n^2}\sum\limits_{i=1}^n \text{Var}(r_i^{-1})+ \frac{1}{n^2}\sum\limits_{i=1,j=1,i\neq j}^n \text{Cov}(r_i^{-1},r_j^{-1})\\
		&\le \frac{1}{n^2}\sum\limits_{i=1}^n \mathbb{E}(r_i^{-2})+ \frac{1}{n^2}\sum\limits_{i=1,j=1,i\neq j}^n \text{Cov}(r_i^{-1},r_j^{-1})\\
	&\lesssim n^{-\frac{2}{d}-1}+ n^{-\frac{d-4}{2d}}\\
	&\lesssim n^{-\frac{d-4}{2d}}
	\end{split}
\end{equation}
\subsubsection{Final Step: Chebyshev's inequality}

By Chebyshev's inequality

\begin{equation}
	\mathbb{P}[\frac{1}{n}\sum\limits_{i=1}^nr_i^{-1}> A n^{\frac{1}{d}}+\mathbb{E}[\frac{1}{n}\sum\limits_{i=1}^nr_i^{-1}]]\le (A^2n^{\frac{2}{d}})^{-1}\text{Var$(\frac{1}{n}\sum\limits_{i=1}^nr_i^{-1})$}\lesssim A^{-2}n^{-\frac{1}{2}}
\end{equation}
since
\begin{equation}
	\mathbb{E}[\frac{1}{n}\sum\limits_{i=1}^nr_i^{-1}]=\mathbb{E}r_1^{-1}\lesssim n^\frac{1}{d}.
\end{equation}
This concludes the proof.

\section{Inequalities for Functions}

\subsection{Gagliardo-Nirenberg interpolation inequalities}

Here we quote the statements of Gagliardo-Nirenberg inequalities from \citep{leoni2017sobolev}. Note that here the term ``interpolation'' has nothing to do with our notion of interpolation.

\begin{mythm}[Gagliardo-Nirenberg interpolation for $\mathbb{R}^N$, general case, Theorem 12.87 in \cite{leoni2017sobolev}]

Let $1\le p,q\le\infty, m\in \mathbb{N},k\in \mathbb{N}_0$, with $0\le k<m$, and let $\theta$, $r$ be such that
\begin{equation}
	0\le \theta\le 1- k/m
\end{equation}
and
\begin{equation}
	(1- \theta)\left(\frac{1}{p}- \frac{m-k}{N}\right)+ \theta\left(\frac{1}{q}+ \frac{k}{N}\right)=\frac{1}{r}\in (-\infty,1].
\end{equation}

Then there exists a constant $c=c(m,N,p,q,\theta,k)>0$ such that
\begin{equation}
\label{gagliardo-nirenberg_interpolation1}
	|\nabla^k u|_r\le c\|u\|^ \theta_{L^q(\mathbb{R}^N)}\|\nabla^m u\|^{1- \theta}_{L^p(\mathbb{R}^N)}
\end{equation}
for every $u\in L^q(\mathbb{R}^N)\cap \dot{W}^{m,p}(\mathbb{R}^N)$, with the following exceptional cases:

\begin{enumerate}[(i)]
	\item If $k=0,mp<N,$ and $q=\infty$, we assume that $u$ vanishes at infinity.
	\item If $1<p<\infty$ and $m-k-N/p$ is a nonnegative integer, then (\ref{gagliardo-nirenberg_interpolation1}) only holds for $0< \theta\le 1-k/m$
\end{enumerate}

\end{mythm}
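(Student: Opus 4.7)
The plan is to combine three classical ingredients: the homogeneous Sobolev embedding, a Landau--Kolmogorov bound on intermediate derivatives, and H\"older's inequality (equivalently, log-convexity of $L^p$ norms). Before any estimation, a dimensional-analysis check under the dilation $u(x)\mapsto u(\lambda x)$ verifies that both sides of \eqref{gagliardo-nirenberg_interpolation1} transform with the same power of $\lambda$ precisely when the affine constraint on $(1/p,1/q,1/r,k,m,N,\theta)$ in the hypothesis holds. This shows that the stated exponent relation is necessary (not merely sufficient) and that the constant $c$ can indeed be chosen dimensionless. The accompanying bound $0\le\theta\le 1-k/m$ is what guarantees both endpoint regimes below are meaningful.

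First I would handle the endpoint $\theta=0$, where the relation collapses to $1/r_0=1/p-(m-k)/N$ and the claim is the homogeneous Sobolev embedding
\begin{equation*}
\|\nabla^k u\|_{L^{r_0}(\mathbb{R}^N)} \le C\,\|\nabla^m u\|_{L^p(\mathbb{R}^N)}.
\end{equation*}
For $1<p<\infty$ this follows by representing $\nabla^k u$ as the Riesz potential $I_{m-k}$ of $\nabla^m u$ (composed with bounded Riesz transforms) and invoking Hardy--Littlewood--Sobolev; the $p=1$ case uses the coarea-based proof of $\|v\|_{L^{N/(N-1)}}\le C\|\nabla v\|_{L^1}$, iterated. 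I would then treat the other endpoint $\theta=1-k/m$, which by algebra collapses to a Landau--Kolmogorov inequality $\|\nabla^k u\|_{L^{r_1}}\le C\,\|u\|_{L^q}^{k/m}\|\nabla^m u\|_{L^p}^{1-k/m}$. This is proved by induction on $m$: the one-dimensional base case $\|u'\|_2^2\le \|u\|_2\|u''\|_2$ comes from integration by parts $\int (u')^2=-\int u\,u''$ followed by Cauchy--Schwarz; iteration combined with H\"older upgrades it to general $(p,q,m)$, and tensorization extends it to $\mathbb{R}^N$.

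For a general admissible $\theta\in(0,1-k/m)$, I would interpolate between the two endpoints. By pointwise H\"older applied to the scalar $|\nabla^k u|$,
\begin{equation*}
\|\nabla^k u\|_{L^r} \le \|\nabla^k u\|_{L^{r_0}}^{\alpha}\,\|\nabla^k u\|_{L^{r_1}}^{1-\alpha}
\end{equation*}
whenever $1/r=\alpha/r_0+(1-\alpha)/r_1$. Choosing $\alpha$ so that the exponents of $\|u\|_{L^q}$ and $\|\nabla^m u\|_{L^p}$ match, and substituting the two endpoint bounds, recombines into $\|\nabla^k u\|_{L^r}\le c\,\|u\|_{L^q}^{\theta}\|\nabla^m u\|_{L^p}^{1-\theta}$. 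A bookkeeping check on the arithmetic shows that the affine constraint on $1/r$ in the hypothesis is exactly what is needed for this H\"older interpolation to close.

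The main obstacle is the exceptional case (ii): when $m-k-N/p$ is a nonnegative integer and $1<p<\infty$, the Sobolev step at $\theta=0$ does not land in $L^\infty$ but rather in $\mathrm{BMO}$ or a H\"older class, which costs a logarithmic factor and forbids reaching the endpoint $\theta=1-k/m$. Restricting to $\theta<1-k/m$ leaves enough H\"older room to absorb that log factor via a secondary real interpolation through an intermediate Lorentz space $L^{r_0,\infty}$. The assumption in case (i), that $u$ vanish at infinity when $k=0$, $mp<N$, $q=\infty$, is the compatibility condition ensuring the Riesz potential representation of $u$ is pointwise well-defined. Since the statement is quoted verbatim from \citep{leoni2017sobolev} and is used elsewhere in the paper only in the non-exceptional regime, I would defer the fully rigorous treatment of (i) and (ii) to that reference and only verify that the main chain of estimates above covers the regimes actually invoked.
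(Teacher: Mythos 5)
The paper does not prove this statement at all: it is quoted verbatim as Theorem 12.87 of \citep{leoni2017sobolev} and used as a black box, so there is no internal proof to compare against. Judged on its own terms, your outline follows the classical route (Sobolev endpoint at $\theta=0$, Landau--Kolmogorov endpoint at $\theta=1-k/m$, H\"older/log-convexity of $L^p$ norms in between), but it has concrete gaps. First, the endpoint formula is transcribed with the exponents swapped: since $\theta$ multiplies $\|u\|_{L^q}$, at $\theta=1-k/m$ the inequality reads $\|\nabla^k u\|_{L^{r_1}}\le C\|u\|_{L^q}^{1-k/m}\|\nabla^m u\|_{L^p}^{k/m}$, not the reverse (your own 1-D base case $\|u'\|_2^2\le\|u\|_2\|u''\|_2$ is consistent with the correct form). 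Second, that endpoint with general $p\neq q$ and arbitrary $k,m,N$ is itself a full-strength Gagliardo--Nirenberg inequality; ``iteration combined with H\"older'' plus ``tensorization'' does not produce it from the one-dimensional $L^2$ case --- this is where the real work of the theorem lies, so as stated the argument is essentially circular. Third, the theorem allows $1/r\in(-\infty,1]$, and by the notation remark the left-hand side is then a sup-norm or H\"older seminorm of a higher derivative; pointwise H\"older interpolation between Lebesgue norms says nothing about that range, and the $\theta=0$ endpoint there is a Morrey-type embedding rather than a Riesz-potential/Hardy--Littlewood--Sobolev bound. Fourth, the exceptional cases are asserted rather than argued: the proposed Lorentz-space detour around case (ii) is precisely the delicate point, and no estimate is offered.

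Your concluding move --- checking the scaling relation and deferring full rigor to \citep{leoni2017sobolev} --- is in effect exactly what the paper does, and the only instance the paper invokes is the non-exceptional, Lebesgue-range case $\theta=0$, $p=q=2$, $k=1$, $m=\tfrac{d+1}{2}$, $r=2d$ (note in passing that the paper's own remark misstates this as ``$r=q=2$ implies $p=2d$''; the displayed conclusion $\|Du\|_{L^{2d}(\mathbb{R}^d)}\le C_d\|D^{(d+1)/2}u\|_{L^2(\mathbb{R}^d)}$ is the correct instance). For the paper's purposes the deferral is fine, but the sketch as written should not be mistaken for a proof of the theorem in the generality stated.
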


\begin{mythm}
	[Gagliardo-Nirenberg interpolation for domains, Theorem 13.61 in \cite{leoni2017sobolev}] Let $\Omega\subset \mathbb{R}^N$ be an open set with uniformly Lipschitz continuous boundary (with parameters $\epsilon$, L, M), let $0<l< \epsilon/(4(1+L))$, let $m,k\in \mathbb{N}$, with $m\ge 2$ and $1\le k< m$, and let $1\le p,q,r\le\infty$ be such that $p\le q$ and
	\begin{equation}
		\frac{k}{m}\frac{1}{p}+\left(1- \frac{k}{m}\right)\frac{1}{q}= \frac{1}{r}.
	\end{equation}

	If $p<q$, assume further that $\Omega$ is bounded. 

	Then for every $u\in L^q(\Omega)\cap \dot{W}^{m,p}(\Omega)$,
	\begin{equation}
		\|\nabla^k u\|_{L^r(\Omega)}\le cl^{-k}|\Omega|^{1/r- 1/q}\|u\|_{L^q(\Omega)}+ c\|u\|_{L^q(\Omega)}^{1-k/m}\|\nabla^m u\|_{L^p(\Omega)}^{k/m}
	\end{equation}
	if $p<q$, while
	\begin{equation}
	\label{gagliardo-nirenberg_interpolation2}
		\|\nabla^k u\|_{L^p(\Omega)}\le cl^{-k}\|u\|_{L^p(\Omega)}+c\|u\|_{L^p(\Omega)}^{1- k/m}\|\nabla^m u\|^{k/m}_{L^p(\Omega)}
	\end{equation}
	if $p=q$. Here, $c>0$ is a constant depending on $m, N, p, q$.
\end{mythm}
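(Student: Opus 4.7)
The plan is to reduce Theorem 13.61 on domains to its already-established $\mathbb{R}^N$ counterpart (Theorem 12.87) via a scale-$l$ localization and extension procedure that exploits the uniformly Lipschitz boundary of $\Omega$.

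First I would construct the localization. The uniform Lipschitz assumption with parameters $(\epsilon, L, M)$ together with the constraint $l < \epsilon/(4(1+L))$ yields a locally finite cover $\{U_j\}$ of $\Omega$ by open sets of diameter $O(l)$ with bounded overlap (a constant depending only on $N, L, M$), where each $U_j$ is either \emph{interior}, i.e.\ a ball $B(x_j, l)$ with $B(x_j, 2l) \subset \Omega$, or \emph{boundary}, i.e.\ after a rigid motion $U_j \cap \Omega$ is the epigraph of an $L$-Lipschitz function inside a small box of side $O(l)$. I then take a subordinate partition of unity $\{\psi_j\}$ with $\|\nabla^i \psi_j\|_{L^\infty} \le c\, l^{-i}$ for $0 \le i \le m$. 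On interior patches the localized function $\psi_j u$ extends by zero to $\mathbb{R}^N$; on boundary patches, I first flatten the boundary by a bi-Lipschitz change of variables (with distortion depending only on $L$), then apply a higher-order Stein-type reflection across the flattened boundary. The resulting extension $E_j u$ obeys $\|E_j u\|_{L^q(\mathbb{R}^N)} \le c \|u\|_{L^q(\Omega \cap U_j)}$ and $\|\nabla^m E_j u\|_{L^p(\mathbb{R}^N)} \le c \sum_{i=0}^{m} l^{i-m} \|\nabla^i u\|_{L^p(\Omega \cap U_j)}$.

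Next I would apply Theorem 12.87 to each $E_j u$ with the boundary exponent $\theta = 1 - k/m$. Substituting into the $\mathbb{R}^N$ scaling identity $(1-\theta)(1/p - (m-k)/N) + \theta(1/q + k/N) = 1/r$ causes the $1/N$ contributions to cancel, leaving exactly $(k/m)(1/p) + (1-k/m)(1/q) = 1/r$, matching the stated relation. This yields $\|\nabla^k E_j u\|_{L^r(\mathbb{R}^N)} \le c \|E_j u\|_{L^q(\mathbb{R}^N)}^{1-k/m} \|\nabla^m E_j u\|_{L^p(\mathbb{R}^N)}^{k/m}$. Feeding in the extension bounds produces intermediate derivatives $\|\nabla^i u\|_{L^p}$ for $0 < i < m$, which I would eliminate by induction on $m$: a Young-type absorption $ab \le \varepsilon a^s + C(\varepsilon) b^{s'}$ recombines each intermediate term into a multiple of $l^{-k}\|u\|_{L^p}$ and $\|u\|_{L^p}^{1-k/m}\|\nabla^m u\|_{L^p}^{k/m}$.

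To pass from patchwise estimates to a global one, I would raise each local inequality to the $r$-th power, sum over $j$ using bounded overlap, and take the $r$-th root; this gives the $p = q$ case of the theorem. For $p < q$ with $|\Omega| < \infty$, I then apply H\"older's inequality $\|v\|_{L^p(\Omega)} \le |\Omega|^{1/p - 1/q} \|v\|_{L^q(\Omega)}$ to every $L^p(\Omega)$ norm of $u$ appearing in the localized estimate; the prefactor on the low-order term becomes $|\Omega|^{(k/m)(1/p - 1/q)} = |\Omega|^{1/r - 1/q}$ by the same scaling identity, while the product term acquires $\|u\|_{L^q(\Omega)}^{1-k/m}$ as claimed. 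The hard part will be the induction on $m$ that eliminates intermediate derivatives without introducing spurious $l$-dependence, and the verification that all constants depend only on $m, N, p, q$ (with the Lipschitz parameters hidden inside $c$) rather than on $\Omega$ itself; this bookkeeping is where most of the proof's real content lives.
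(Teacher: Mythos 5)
This statement is quoted verbatim from Leoni's textbook (Theorem 13.61) and the paper gives no proof of it --- it is used as an imported black box --- so there is no internal argument to compare against; what you have written is an independent reconstruction along the standard localization--extension lines. Your exponent bookkeeping is sound: taking $\theta=1-k/m$ in the $\mathbb{R}^N$ theorem does make the $1/N$ contributions cancel, leaving exactly $\tfrac{k}{m}\tfrac{1}{p}+(1-\tfrac{k}{m})\tfrac{1}{q}=\tfrac1r$, and the identity $\tfrac1r-\tfrac1q=\tfrac{k}{m}(\tfrac1p-\tfrac1q)$ does explain the $|\Omega|^{1/r-1/q}$ prefactor in the $p<q$ case (note also that $\theta=1-k/m>0$, so the exceptional case (ii) of Theorem 12.87 causes no trouble).

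Two points, however, keep this from being a proof. First, your boundary-patch extension is constructed by ``flattening'' with a bi-Lipschitz change of variables and then reflecting; for $m\ge 2$ this fails, because composition with a merely Lipschitz coordinate change does not preserve $W^{m,p}$ --- second and higher weak derivatives of $u\circ\Phi$ involve derivatives of $\Phi$ that a Lipschitz map does not possess. The whole point of Stein's extension operator is that it acts directly on (special) Lipschitz epigraphs without flattening; you must either invoke that construction, with an explicit check of how its operator norms scale in $l$ at the level of all derivatives up to order $m$, or avoid extension altogether as Leoni does. Second, the two steps you yourself flag as ``the hard part'' --- eliminating the intermediate norms $\|\nabla^i u\|_{L^p}$, $0<i<m$, by induction/absorption while keeping exactly the powers $l^{-k}$ and constants independent of $\Omega$, and recombining the patchwise products into global norms (for $p<q$ one needs a discrete H\"older step with exponents $q/((1-k/m)r)$ and $p/((k/m)r)$, which sums to $1$ precisely by the scaling relation) --- are asserted rather than carried out, and they are where essentially all of the content of Theorem 13.61 lies. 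As it stands the proposal is a plausible roadmap with one incorrect ingredient (the flatten-and-reflect extension) and the decisive absorption argument missing.
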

\begin{remark} Two remarks about notation:
\begin{itemize}
	\item the notation $|\cdot|_r$ is defined by
	\begin{equation}
		|u|_r:=\begin{cases}
			\|u\|_{L^r(\mathbb{R}^N)}& \text{if }r>0,\\
			\|\nabla^n u\|_{L^\infty(\mathbb{R}^N)}& \text{ if $r<0$ and $a=0$},\\
			|\nabla^nu|_{C^{0, a}(\mathbb{R}^N)}& \text{ if $r<0$ and $0<a<1$,}
		\end{cases}
	\end{equation}
	where if $r<0$ we set $n:=\text{floor}(-N/r)$ and $a:=-n-N/r\in [0,1)$, provided the right-hand sides are well-defined.
	\item $\dot{W}^{m,p}(\Omega)$ is the homogeneous Sobolev space and it coincides with the Sobolev space $W^{m,p}(\Omega)$ when $\Omega$ is a domain with finite measure.
\end{itemize}

\end{remark}
\begin{remark}

For our purposes, we need the inequality in two cases:
\begin{enumerate}[(i)]
	\item The domain is $\mathbb{R}^d$ with $d$ odd, $r=q=2$, $k=1,m=\frac{d+1}{2}, \theta=0$, then
	\begin{equation}
	1\times\left(\frac{1}{p}- \frac{\frac{d+1}{2}-1}{d}\right)+ 0\times\left(\frac{1}{2}+ \frac{1}{ d}\right)=\frac{1}{2}\in (-\infty,1].
\end{equation}
	which implies
	\begin{equation}
		p=2d
	\end{equation}

	Then
	\begin{equation}
		m-k-N/p=\frac{d+1}{2}-1- \frac{d}{2d}=\frac{d-2}{2}
	\end{equation}
	is not an integer because $d$ is odd. 

	Therefore, our case is not exceptional and from equation (\ref{gagliardo-nirenberg_interpolation1}), we get
	\begin{equation}
		\|Du\|_{L^{2d}(\mathbb{R}^d)}\le C_d\|D^{\frac{d+1}{2}}u\|_{L^2(\mathbb{R}^d)}
	\end{equation}
	\item The domain is $\Omega=\text{supp } \mathcal{P}=B(\mathbf{0},1)$, when $N=d$ is odd, $r=q=p=2,0\le k\le \frac{d+1}{2},m=\frac{d+1}{2}$, then
	\begin{equation}
		\frac{k}{m}\frac{1}{p}+\left(1- \frac{k}{m}\right)\frac{1}{q}= \frac{1}{r}
	\end{equation}
	holds.
	Then
	\begin{equation}
		\|D^ku\|_{L^{2}(\Omega)}\le C_{k,d}\|D^{\frac{d+1}{2}}u\|_{L^2(\Omega)}^ \alpha\|u\|^{1- \alpha}_{L^2(\Omega)}+C_{k,d}'\|u\|_{L^2(\Omega)}.
	\end{equation}

	Since
	\begin{equation}
		\|D^{\frac{d+1}{2}}u\|_{L^2(\Omega)}\le \|D^{\frac{d+1}{2}}u\|_{L^2(\mathbb{R}^d)},
	\end{equation}
	from equation (\ref{gagliardo-nirenberg_interpolation2}) we have
	\begin{equation}
	\label{laserjet}
		\|D^k u\|_{L^{2}(\Omega)}\le C_{k,d}\|D^{\frac{d+1}{2}}u\|_{L^2(\mathbb{R}^d)}^ \alpha\|u\|^{1- \alpha}_{L^2(\Omega)}+C_{k,d}'\|u\|_{L^2(\Omega)}.
	\end{equation}

	Note the theorem itself doesn't cover $k=0, \frac{d+1}{2}$ but equation (\ref{laserjet}) holds trivially in the two cases when $p=q=r$.
\end{enumerate}

\end{remark}
\clearpage
\subsection{Morrey's inequality}
\begin{mythm}[Morrey's inequality]
	
Suppose $u:\mathbb{R}^d\rightarrow \mathbb{R}$ has weak derivative $Du$ in $L^{2d}(\mathbb{R}^d)$
\begin{equation}
\label{morrey1}
	\sup\limits_{x\in \mathbb{R}^d,r>0}\frac{1}{\sqrt{r}}\left|u(x)-\dashint_{B(x,r)}u(y)dy\right|\le C_d\|Du\|_{L^{2d}(\mathbb{R}^d)}
\end{equation}

If in addition, $u\in L^q(\mathbb{R}^d)$, combining with Gagliardo-Nirenberg interpolation inequality for $\mathbb{R}^d$ (equation \ref{gagliardo-nirenberg_interpolation1}), we have
\begin{equation}
\label{morrey2}
	\sup\limits_{x\in \mathbb{R}^d,r>0}\frac{1}{\sqrt{r}}\left|u(x)-\dashint_{B(x,r)}u(y)dy\right|\le C_d\|D^{\frac{d+1}{2}}u\|_{L^2(\mathbb{R}^d)}
\end{equation}
\end{mythm}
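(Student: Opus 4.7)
The plan is to establish \eqref{morrey1} by the classical route---a pointwise Newtonian-potential representation followed by H\"{o}lder's inequality at the critical exponent $p=2d$---and then to deduce \eqref{morrey2} by substituting the Gagliardo-Nirenberg estimate $\|Du\|_{L^{2d}(\mathbb{R}^d)}\le C_d\|D^{(d+1)/2}u\|_{L^2(\mathbb{R}^d)}$ already recorded in the preceding remark.

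First, I would assume $u$ is smooth (the general case follows by standard mollification). For $y=x+\rho\omega\in B(x,r)$ with $\omega\in S^{d-1}$ and $\rho\in[0,r]$, the fundamental theorem of calculus gives
\begin{equation*}
u(y)-u(x)=\int_0^\rho\nabla u(x+s\omega)\cdot\omega\,ds.
\end{equation*}
Averaging over $B(x,r)$ in spherical coordinates and swapping the order of the $s$ and $\rho$ integrations would yield a Newtonian-potential-type representation
\begin{equation*}
\left|u(x)-\dashint_{B(x,r)}u(y)\,dy\right|\le C_d\int_{B(x,r)}\frac{|\nabla u(z)|}{|z-x|^{d-1}}\,dz,
\end{equation*}
where the prefactor $r^d$ arising from the bound $|r^d-s^d|\le r^d$ cancels exactly with the $|B(x,r)|^{-1}$ normalization of the average.

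Next, I would apply H\"{o}lder's inequality with conjugate exponents $2d$ and $2d/(2d-1)$ to extract $\|Du\|_{L^{2d}(\mathbb{R}^d)}$, reducing the remainder to the weight integral
\begin{equation*}
\left(\int_{B(x,r)}|z-x|^{-(d-1)\cdot 2d/(2d-1)}\,dz\right)^{(2d-1)/(2d)}.
\end{equation*}
Since $d-(d-1)\cdot 2d/(2d-1)=d/(2d-1)>0$, a polar-coordinates computation would give this integral as a constant multiple of $r^{d/(2d-1)}$, and raising to the power $(2d-1)/(2d)$ produces exactly $r^{1/2}$. Dividing through by $\sqrt{r}$ and taking the supremum over $x$ and $r$ establishes \eqref{morrey1}.

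Finally, for \eqref{morrey2} I would substitute into \eqref{morrey1} the Gagliardo-Nirenberg inequality from the preceding remark, corresponding to parameters $r=q=2$, $k=1$, $m=(d+1)/2$, $\theta=0$; the odd-$d$ hypothesis ensures the non-exceptional case $m-k-N/p=(d-2)/2\notin\mathbb{N}_0$. The argument contains no serious obstacle. The only mild checks are the integrability condition $(d-1)\cdot 2d/(2d-1)<d$, which reduces to $-2<-1$ and therefore always holds, together with the standard density passage from smooth to weakly-differentiable $u$.
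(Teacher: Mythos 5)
Your proof is correct, and it follows the same high-level plan as the paper's: represent $u(x)-\dashint_{B(x,r)}u$ via the fundamental theorem of calculus along segments, apply H\"{o}lder at the critical exponent pair $(2d,\tfrac{2d}{2d-1})$ to extract $\|Du\|_{L^{2d}}$ with a residual weight $\propto\sqrt{r}$, then substitute the Gagliardo–Nirenberg bound with the same parameter choice $(r=q=2,\,k=1,\,m=\tfrac{d+1}{2},\,\theta=0)$. The only difference is in bookkeeping: you integrate out the angular variable first to land on the Riesz-potential form $\int_{B(x,r)}|\nabla u(z)|\,|z-x|^{-(d-1)}\,dz$ and apply H\"{o}lder once, whereas the paper keeps the parameter $t\in[0,1]$ explicit, performs the dilation change of variables $y\mapsto x+t(y-x)$ to shrink $B(x,r)$ to $B(x,tr)$, applies H\"{o}lder to each rescaled average, and finishes with the convergent integral $\int_0^1 t^{-1/2}\,dt$. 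Both organizations are standard and yield the same constant structure; your potential form is arguably the more textbook presentation, while the paper's dilation form keeps the averaging over balls explicit at every step, which dovetails with the normalized-average notation used throughout the paper.
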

\begin{remark}
	Here the notation $\dashint_{B(x,r)}$ means the average over the ball $B(x,r)$, i.e. $\frac{1}{|B(x,r)|}\int_{B(x,r)}$.
\end{remark}
\begin{remark}
	This version of Morrey's inequality is basically a middle step of Lemma 12.47 in \citep{leoni2017sobolev} (although it is a cube instead of a ball there) and the proof is simple enough to be written down below.
\end{remark}
\begin{proof} For any $x\in \mathbb{R}^d,r>0$

	\begin{align*}
			\left|u(x)-\dashint_{B(x,r)}u(y)dy\right|
			&= \left|\dashint_{B(x,r)}(u(x)-u(y))dy\right|\\
			&= \left|\dashint_{B(x,r)}\int_0^1 \frac{d}{dt}\Big(u(x)-u(x+t(y-x))\Big)dtdy\right|\\
			&\le  \dashint_{B(x,r)}\int_0^1 \|y-x\|\| D u(x+t(y-x))\|dtdy \\
			&=  \int_0^1 \left(\dashint_{B(x,r)}\|y-x\|\| D u(x+t(y-x))\|dy\right)dt\\
			&= \int_0^1 t^{-1 }\left(\dashint_{B(x,tr)}\|y-x\|\| D u(y)\|dy\right)dt\\
			&\le \int_0^1 t^{-1 }\left(\dashint_{B(x,tr)}\|y-x\|^ {\frac{2d}{2d-1}}dy\right)^ {\frac{2d-1}{2d }}\left(\dashint_{B(x,tr)} \|D u(y)\|^{2d}dy\right)^{\frac{1}{2d}}dt\\
			&\le O_d\left(\int_0^1 t^{-1 }\left( r^{\frac{2d}{2d-1}}t^{\frac{2d}{2d-1}}\right)^ {\frac{2d-1}{2d }}\left(r^{-d}t^{-d}\int_{\mathbb{R}^d} \|D u(y)\|^{2d}dy\right)^{\frac{1}{2d}}dt\right)\\
			&\le O_d\left(\sqrt{r}\|D u\|_{L^{2d}(\mathbb{R}^d)}\int_0^1 t^{- \frac{1}{2}}  dt\right)\\
			&= O_d\left(\sqrt{r}\|D u\|_{L^{2d}(\mathbb{R}^d)}\right)
	\end{align*}
\end{proof}
\subsection{Local H\"{o}lder Continuity around Samples}

	\begin{defn}[Measure of Local H\"{o}lder Continuity around Samples] For sample set $\mathcal{S}$ and index set $I\subset [n]$, we introduce the following measure of local H\"{o}lder continuity around samples
	\begin{equation}
		[f]_{\eta,\mathcal{S},I}=\sum\limits_{i\in I}\sup\limits_{x\in \mathbb{R}^d,r>0}\frac{1}{r}\left(f(x)\eta\left(\frac{x-X_i}{r_i}\right)-\dashint_{B(x,r)}f(y)\eta\left(\frac{y-X_i}{r_i}\right)dy\right)^2
	\end{equation}

	where $\eta(x)=\begin{cases}
		1,& \|x\|\le \frac{1}{4}\\
		e^{1- \frac{1}{2-4\|x\|}}, &\frac{1}{4}<\|x\|<\frac{1}{2}\\
		0, & \|x\|\ge \frac{1}{2}
	\end{cases}$
	
\end{defn}
\begin{lem} For any subset $I\subset [n], \beta\in (0,1)$ and $f\in L^2(\Omega)$
	\begin{equation}
		\|f\|_{L^2(\Omega)}^2\ge \frac{3}{4}\frac{\beta^d\pi^{\frac{d}{2}}}{2^d\Gamma(\frac{d}{2}+1)}\left(\sum_{i\in I}r_i^{d}f(X_i)^2
			-4\beta [f]_{\eta,\mathcal{S},I}\max\limits_{i\in I}r_i^{d+1}\right).
	\end{equation}
\label{tassadar}
\end{lem}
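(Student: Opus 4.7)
The plan is to localize the $L^2$ norm of $f$ to a disjoint family of balls centered at the sample points $X_i$ for $i \in I$, pass from local $L^2$ integrals to local weighted averages via Cauchy--Schwarz, and then use the seminorm $[f]_{\eta, \mathcal{S}, I}$ to swap those averages for the pointwise values $f(X_i)$ up to controlled error.

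First I would observe that the balls $\{B(X_i, \beta r_i / 2)\}_{i \in I}$ are pairwise disjoint and contained in $\Omega$: disjointness follows from $\|X_i - X_j\| \ge \max(r_i, r_j) \ge \tfrac{1}{2}(r_i + r_j) \ge \tfrac{\beta}{2}(r_i + r_j)$ using $\beta \le 1$, and containment follows from $r_i \le \text{dist}(X_i, \partial\Omega)$. Using $|\eta| \le 1$ pointwise,
$$\|f\|_{L^2(\Omega)}^2 \ge \sum_{i \in I} \int_{B(X_i, \beta r_i / 2)} f(y)^2\, \eta\!\left(\tfrac{y - X_i}{r_i}\right)^2 dy,$$
and Cauchy--Schwarz on each ball yields $\int f^2 \eta^2 \ge |B(X_i, \beta r_i/2)|\, b_i^2$, where $b_i := \dashint_{B(X_i, \beta r_i/2)} f(y)\, \eta((y - X_i)/r_i)\, dy$. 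Since $|B(X_i, \beta r_i/2)| = \beta^d r_i^d \pi^{d/2} / (2^d \Gamma(d/2+1))$, this produces
$$\|f\|_{L^2(\Omega)}^2 \ge \frac{\beta^d \pi^{d/2}}{2^d \Gamma(d/2+1)} \sum_{i \in I} r_i^d b_i^2.$$

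Next, the seminorm $[f]_{\eta,\mathcal{S},I}$ applied at $x = X_i$ and $r = \beta r_i / 2$ inside its $i$-th sup, together with $\eta(0) = 1$, gives $\tfrac{2}{\beta r_i}(f(X_i) - b_i)^2 \le (\text{$i$-th summand})$, and summing in $i$ yields $\sum_{i} \tfrac{2}{\beta r_i}(f(X_i) - b_i)^2 \le [f]_{\eta, \mathcal{S}, I}$. Writing $r_i^d = r_i^{d+1} \cdot r_i^{-1}$ and pulling $\max_{i} r_i^{d+1}$ out of the sum gives
$$\sum_{i \in I} r_i^d (f(X_i) - b_i)^2 \le \tfrac{\beta}{2}\, [f]_{\eta, \mathcal{S}, I}\, \max_{i \in I} r_i^{d+1}.$$
Combining this with the elementary bound $b^2 \ge \tfrac{3}{4} a^2 - 3(a-b)^2$ (equivalent to $(\tfrac{3}{2}a - 2b)^2 \ge 0$), applied with $a = f(X_i)$ and $b = b_i$ and summed against $r_i^d$, then substituting into the Cauchy--Schwarz bound, produces
$$\|f\|^2_{L^2(\Omega)} \ge \tfrac{3}{4}\frac{\beta^d \pi^{d/2}}{2^d \Gamma(d/2+1)}\left(\sum_{i \in I} r_i^d f(X_i)^2 - 2\beta\, [f]_{\eta, \mathcal{S}, I}\, \max_{i \in I} r_i^{d+1}\right),$$
which is a strengthening of the lemma since replacing $2\beta$ by $4\beta$ only weakens the bound.

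I do not foresee a serious obstacle. The only design choice that matters is the radius $\beta r_i / 2$: shrunk by an extra factor of $2$ relative to $\beta r_i$ so that disjointness holds for every $\beta \in (0,1)$, and chosen precisely so that the $1/r$ weight in the definition of $[f]_{\eta, \mathcal{S}, I}$ pairs with the $r_i^d$ volume factor to give the clean $\max_i r_i^{d+1}$ dependence on the Hölder term.
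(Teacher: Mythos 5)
Your proof is correct and follows essentially the same route as the paper's: localize to the disjoint balls $B(X_i,\beta r_i/2)$, apply Cauchy--Schwarz, invoke the elementary bound $b^2\ge \tfrac34 a^2 - 3(a-b)^2$, and control the resulting error term via the definition of $[f]_{\eta,\mathcal S,I}$ with the choice $x=X_i$, $r=\beta r_i/2$. The only difference is that your bookkeeping yields the slightly sharper constant $2\beta$ in place of the paper's $4\beta$, which of course still implies the stated lemma.
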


\begin{proof}
	We write
	\begin{align}
			\|f\|_{L^2(\Omega)}^2
			&\ge
			\sum_{i\in I}\int_{B(X_i,\beta r_i/2)}f(x)^2 d x\\
			&\ge
			\sum_{i\in I}\int_{B(X_i,\beta r_i/2)}f(x)^2 \eta\left(\frac{x-X_i}{ r_i}\right)^2 d x\\
			&\ge
			\sum_{i\in I} \frac{1}{|B(X_i,\beta r_i/2)|}\left(\int_{B(X_i,\beta r_i/2)}f(x) \eta\left(\frac{x-X_i}{r_i}\right) d x\right)^2.
	\end{align}
	Writing this expression as a normalized integral, we get
	\begin{align}
			&\sum_{i\in I} |B(X_i,\beta r_i/2)|\left(\dashint_{B(X_i,\beta r_i/2)}f(x) \eta\left(\frac{x-X_i}{r_i}\right) d x\right)^2\\
			&\geq
			\sum_{i\in I} |B(X_i,r_i/2)|\left(\frac{3}{4}f(X_i)^2-3\left(f(X_i)-\dashint_{B(X_i,\beta r_i/2)}f(x) \eta\left(\frac{x-X_i}{r_i}\right) d x\right)^2\right)\\
			&\ge
			\frac{3}{4}\sum_{i\in I}|B(X_i, \beta r_i/2)| f(X_i)^2
			-3[f]_{\eta,\mathcal{S},I}\sup\limits_{i\in I}\beta r_i B(X_i,\beta r_i/2)\\
			&=\frac{3}{4}\frac{\beta^d\pi^{\frac{d}{2}}}{2^d\Gamma(\frac{d}{2}+1)}\left(\sum_{i\in I}r_i^{d}f(X_i)^2
			-4\beta [f]_{\eta,\mathcal{S},I}\max\limits_{i\in I}r_i^{d+1}\right).
	\end{align}
\end{proof}
\begin{lem} For any subset $I\subset [n]$, we have
	\begin{equation}
		[f]_{\eta,\mathcal{S},I}\le O_d\Big(\big(1+\|f\|^2_{L^2(\Omega)}\big)\big(c^{d+1} \langle f\rangle_{\cH_c}+\max\limits_{i\in I}r_i^{-d-1}\big)\Big).
	\end{equation}
\label{zerutal}
\end{lem}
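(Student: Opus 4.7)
The strategy is to recognize each summand of $[f]_{\eta,\mathcal{S},I}$ as the squared Morrey-type quantity of \eqref{morrey2}, evaluated on the localized test function $u_i(x):=f(x)\eta((x-X_i)/r_i)$, which is supported in $B(X_i,r_i/2)\subset\Omega$. Squaring \eqref{morrey2} yields, for each $i\in I$,
\begin{equation*}
\sup_{x\in\mathbb{R}^d,\,r>0}\frac{1}{r}\left(u_i(x)-\dashint_{B(x,r)}u_i(y)dy\right)^2\le C_d\,\|D^{(d+1)/2}u_i\|_{L^2(\mathbb{R}^d)}^2,
\end{equation*}
so the task reduces to bounding $\sum_{i\in I}\|D^{(d+1)/2}u_i\|_{L^2(\mathbb{R}^d)}^2$. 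A Leibniz expansion combined with the scaling estimate $|D^j\eta_i|\le C_d r_i^{-j}$ gives $\|D^{(d+1)/2}u_i\|_{L^2}^2\lesssim\sum_{k=0}^{(d+1)/2}r_i^{-(d+1)+2k}\|D^kf\|_{L^2(B(X_i,r_i/2))}^2$. The definition of $r_i$ forces $\|X_i-X_j\|\ge\max(r_i,r_j)$ for $i\ne j$, so the balls $B(X_i,r_i/2)$ are pairwise disjoint; summing over $i$ and pulling out $\max_{i\in I}r_i^{-(d+1)+2k}$ then yields
\begin{equation*}
\sum_{i\in I}\|D^{(d+1)/2}u_i\|_{L^2(\mathbb{R}^d)}^2\le C_d\sum_{k=0}^{(d+1)/2}\max_{i\in I}r_i^{-(d+1)+2k}\,\|D^kf\|_{L^2(\Omega)}^2.
\end{equation*}

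The two endpoints of this sum are immediate: $k=0$ gives $\max_i r_i^{-(d+1)}\|f\|_{L^2(\Omega)}^2$, and $k=\frac{d+1}{2}$ gives $\|D^{(d+1)/2}f\|_{L^2(\mathbb{R}^d)}^2\le C_d\,c^{d+1}\langle f\rangle_{\cH_c}$ via Proposition~\ref{sunephone_is_always_my_bigbrother}. For $0<k<\frac{d+1}{2}$ I would invoke the domain Gagliardo-Nirenberg inequality (Theorem 13.61, case (ii) in the remark) with exponent $\alpha=2k/(d+1)$ to get $\|D^kf\|_{L^2(\Omega)}^2\lesssim\|f\|_{L^2(\Omega)}^2+\|f\|_{L^2(\Omega)}^{2(1-\alpha)}\|D^{(d+1)/2}f\|_{L^2(\mathbb{R}^d)}^{2\alpha}$. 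The exponent identity $-(d+1)+2k=-(d+1)(1-\alpha)$ lets me factor the product of $\max_i r_i^{-(d+1)+2k}$ and the interpolation term as $(\max_i r_i^{-(d+1)}\,\|f\|_{L^2(\Omega)}^2)^{1-\alpha}(c^{d+1}\langle f\rangle_{\cH_c})^{\alpha}$, and Young's inequality $A^{1-\alpha}B^\alpha\le A+B$ then collapses every intermediate $k$ onto the two endpoint quantities. The residual $\|f\|_{L^2(\Omega)}^2$ term coming from Gagliardo-Nirenberg contributes at most $\max_i r_i^{-(d+1)}\|f\|_{L^2(\Omega)}^2$ (using $r_i\le 1$ so that $r_i^{2k}\le 1$), and finally the trivial inequality $A+\|f\|^2B\le(1+\|f\|^2)(A+B)$ rearranges the output into the form stated in the lemma.

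The main obstacle is the exponent bookkeeping in the penultimate step: the alignment $-(d+1)+2k=-(d+1)(1-2k/(d+1))$ is exactly what allows Young's inequality to return the two clean terms $c^{d+1}\langle f\rangle_{\cH_c}$ and $\max_i r_i^{-(d+1)}\|f\|_{L^2(\Omega)}^2$ without leaving behind residual fractional powers of $c$ or $r_i^{-1}$. This is not a coincidence: it reflects the fact that in odd dimension the Laplace-kernel RKHS norm is (up to constants and rescaling) the $H^{(d+1)/2}$ Sobolev norm, so the interpolation scale $\alpha=2k/(d+1)$ is precisely the one balancing $L^2$ against the top-order Sobolev seminorm. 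Once this alignment is identified, the remaining work is a routine combination of product rule, ball disjointness, and standard Sobolev-type inequalities.
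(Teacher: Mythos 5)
Your proof is correct and follows essentially the same chain as the paper's: Morrey's inequality reduces $[f]_{\eta,\mathcal{S},I}$ to $\sum_i\|D^{(d+1)/2}(f\eta_i)\|_{L^2}^2$, the Leibniz rule with the scaling $|D^j\eta_i|\lesssim r_i^{-j}$ produces the mixed sum $\sum_k \max_i r_i^{-(d+1)+2k}\|D^kf\|_{L^2(\Omega)}^2$, and the domain Gagliardo--Nirenberg inequality controls the intermediate derivatives. The only cosmetic difference is the endgame: you handle each $k$ via Young's inequality $A^{1-\alpha}B^{\alpha}\le A+B$ to split onto the two endpoint quantities, whereas the paper introduces $A=\max\{c\langle f\rangle_{\cH_c}^{1/(d+1)},\max_i r_i^{-1}\}$ so that every term is trivially $O(A^{d+1})$; these are the same exponent bookkeeping packaged differently.
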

\begin{proof}

Define $\eta_i$ by
\begin{equation}
	\eta_i(x)=\eta\left(\frac{x-X_i}{r_i}\right)
\end{equation}
and
\begin{equation}
	A=\max \{c \langle f\rangle_{\cH_c}^{\frac{1}{d+1}},\max\limits_{i\in I} r_i^{-1}\}
\end{equation}

We prove our lemma by first proving the following inequalities:
	\begin{enumerate}[(a)]
		\item $[f]_{\eta,\mathcal{S},I}\le O_d\left(\sum\limits_{i\in I}\|D^{\frac{d+1}{2}}(f \eta_i)\|^2_{L^2(\mathbb{R}^d)}\right)$
		\item $\sum\limits_{i\in I}\|D^{\frac{d+1}{2}}(f \eta_i)\|_{L^2(\mathbb{R}^d)}^2
			\le
			O_d\left(
			\sum\limits_{j=0}^{\frac{d+1}{2}} A^{d+1-2j}\|D^jf\|_{L^2(\mathbb{R}^d)}^2
			\right)$
		\item $\|D^j f\|_{L^2(\mathbb{R}^d)}\le  O_d\left(\left(1+\|f\|_{L^2(\Omega)}\right)A^{j}\right)$
	\end{enumerate}
and then it follows that

\begin{equation}
	\begin{split}
		[f]_{\eta, \mathcal{S}, I}\le O_d\Big(\big(1+\|f\|^2_{L^2(\Omega)}\big)\big(c^{d+1} \langle f\rangle_{\cH_c}+\max\limits_{i\in I}r_i^{-d-1}\big)\Big).
	\end{split}
\end{equation}

\paragraph{Inequality (a).} This is a direct application of Morrey's inequality  (equation \eqref{morrey2}).
\paragraph{Inequality (b).}

Using Leibnitz rule we have

\begin{equation}
	\begin{split}
		\|D^{\frac{d+1}{2}}(f\eta_i)\|_{L^2(\mathbb{R}^d)}^2
		&\le O_d\left(\sum_{|\alpha|= \frac{d+1}{2}}\sum_{0\le \beta\le \alpha}\|D^{\alpha- \beta} \eta_iD^ {\beta}f\|_{L^2(\mathbb{R}^d)}^2\right).
	\end{split}
\end{equation}

Since the function $D^{\alpha- \beta} \eta_iD^ {\beta}f$ is supported within the ball $B(X_i,r_i)$, we have
\begin{equation}
	\begin{split}
		\|D^{\frac{d+1}{2}}(f\eta_i)\|_{L^2(\mathbb{R}^d)}^2
		&=O_d\left(\sum_{|\alpha|= \frac{d+1}{2}}\sum_{0\le \beta\le \alpha}\|D^{\alpha- \beta} \eta_iD^ {\beta}f\|_{L^2(B(X_i,r_i))}^2\right).
	\end{split}
\end{equation}

By H\"{o}lder inequality,

\begin{equation}
	\begin{split}
		\|D^{\frac{d+1}{2}}(f\eta_i)\|_{L^2(\mathbb{R}^d)}^2
		&\le O_d\left(\sum_{|\alpha|= \frac{d+1}{2}}\sum_{0\le \beta\le \alpha}   \|D^{\alpha- \beta} \eta_i\|_{L^\infty(B(X_i,r_i))}^2\|D^{\beta}f\|_{L^2(B(X_i,r_i))}^2\right).
	\end{split}
\end{equation}

Using the fact that
\begin{equation}
	\|D^ \beta \eta_i\|_{L^\infty(\mathbb{R}^d)}\le C_d r_i^{- |\beta|},
\end{equation}
 we then get
\begin{equation}
	\begin{split}
		\|D^{\frac{d+1}{2}}(f\eta_i)\|_{L^2(\mathbb{R}^d)}^2
		&= O_d\left(\sum_{|\alpha|= \frac{d+1}{2}}\sum_{0\le \beta\le \alpha}\frac{\|D^{ \beta}f\|_{L^2(B(X_i,r_i))}^2}{r_i^{2|\alpha- \beta|}}\right)\\
		&\le O_d\left(\sum_{j=0}^{\frac{d+1}{2}}\frac{\|D^jf\|_{L^2(B(X_i,r_i))}^2}{r_i^{d+1-2j}}\right)\\
		&\le O_d\left(\sum_{j=0}^{\frac{d+1}{2}}\frac{\|D^jf\|_{L^2(B(X_i,r_i))}^2}{\min\limits_{i\in I}r_i^{d+1-2j}}\right).
	\end{split}
\end{equation}

Then we have

\begin{equation}
	\begin{split}
		\sum\limits_{i\in I}\|D^{\frac{d+1}{2}}(f \eta_i)\|_{L^2(\mathbb{R}^d)}^2
		&\le O_d\left(\sum_{j=0}^{\frac{d+1}{2}}\frac{\sum\limits_{i\in I}\|D^jf\|_{L^2(B(X_i,r_i))}^2}{\min\limits_{i\in I}r_i^{d+1-2j}}\right)\\
		&\le O_d\left(\sum_{j=0}^{\frac{d+1}{2}}\frac{\|D^jf\|_{L^2(\Omega)}^2}{\min\limits_{i\in I}r_i^{d+1-2j}}\right)\\
		&\le O_d\left(
			\sum\limits_{j=0}^{\frac{d+1}{2}} A^{d+1-2j}\|D^jf\|_{L^2(\Omega)}^2
			\right).
	\end{split}
\end{equation}

\paragraph{Inequality (c).}

Here use Gagliardo-Nirenberg interpolation inequality for domains (equation \eqref{laserjet}) and the fact \begin{equation}
	\|D^{\frac{d+1}{2}} f\|_{L^2(\Omega)}^2\le\|D^{\frac{d+1}{2}} f\|_{L^2(\mathbb{R}^d)}^2\le c^{d+1}\langle f\rangle_{\cH_c},
\end{equation}
we have

\begin{equation}
	\begin{split}
		\|D^j f\|_{L^2(\Omega)}
		&\le  O_d\left(\|D^{\frac{d+1}{2}} f\|_{L^2(\Omega)}^{\frac{2 j}{d+1}}\|f\|^{1-\frac{2 j}{d+1}}_{L^2(\Omega)}+\|f\|_{L^2(\Omega)}\right)\\
		&\le O_d\left( c^{j}\langle f\rangle_{\cH_c}^{\frac{j}{d+1}}\|f\|^{1-\frac{2 j}{d+1}}_{L^2(\Omega)}+\|f\|_{L^2(\Omega)}\right)\\
		&\le O_d\left(\left(1+\|f\|_{L^2(\Omega)}\right)A^{j}\right).
	\end{split}
\end{equation}

\end{proof}
\begin{prop}
\label{Holder}
	For any subset $I\subset [n] $ and $f\in L^2(\Omega)$, we have
	\begin{equation}
		\begin{split}
			&\|f\|_{L^2(\Omega)}^2\ge\min \left\{1, \Omega_d\left(\left(\frac{\min\limits_{i\in I}r_i^{-d-1}\sum_{i\in I}r_i^{d}f(X_i)^2}{\max\limits_{i\in I}r_i^{-d-1}+c^{d+1} \|f\|_{\cH_c}}\right)^d\sum_{i\in I}r_i^{d}f(X_i)^2\right)\right\}.
		\end{split}
	\end{equation}
\end{prop}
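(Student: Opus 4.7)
The plan is to combine Lemma \ref{tassadar} and Lemma \ref{zerutal} by choosing the free parameter $\beta\in(0,1)$ appropriately, and to untangle the implicit self-dependence on $\|f\|_{L^2(\Omega)}^2$ via a trivial case split against the outer $\min\{1,\cdot\}$. Throughout, abbreviate $S := \sum_{i\in I} r_i^d f(X_i)^2$, $m := \min_{i\in I} r_i^{-d-1}$, $M := \max_{i\in I} r_i^{-d-1}$, and $N := c^{d+1}\langle f\rangle_{\cH_c}$, and note $\max_{i\in I} r_i^{d+1} = 1/m$ and $m/(M+N)\le 1$. In this notation, Lemma \ref{tassadar} reads $\|f\|_{L^2(\Omega)}^2 \ge \Omega_d\bigl(\beta^d S - O_d(\beta^{d+1}[f]_{\eta,\mathcal S,I}/m)\bigr)$, and Lemma \ref{zerutal} reads $[f]_{\eta,\mathcal S,I}\le O_d\bigl((1+\|f\|_{L^2(\Omega)}^2)(M+N)\bigr)$.

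If $\|f\|_{L^2(\Omega)}^2\ge 1$ the inequality is immediate since its right-hand side is capped at $1$, so assume $\|f\|_{L^2(\Omega)}^2 < 1$. Then $1+\|f\|_{L^2(\Omega)}^2 < 2$ and Lemma \ref{zerutal} collapses to $[f]_{\eta,\mathcal S,I}\le O_d(M+N)$. Substituting into Lemma \ref{tassadar} gives $\|f\|_{L^2(\Omega)}^2 \ge \Omega_d\bigl(\beta^d S - O_d(\beta^{d+1}(M+N)/m)\bigr)$, a one-parameter lower bound whose unconstrained optimum is $\beta^* := c_d\, mS/(M+N)$ for a suitably small constant $c_d=c_d(d)>0$ that forces the negative term to be at most half of $\beta^d S$.

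Setting $\beta := \min\{1/2,\, \beta^*\}$, which is admissible in $(0,1)$ and still keeps the bracket $\ge S/2$, yields $\|f\|_{L^2(\Omega)}^2 \ge \Omega_d\bigl(\min\{1,\, mS/(M+N)\}^d\, S\bigr)$. To match the form in the proposition, split on whether $mS/(M+N)\le 1$: in that regime the derived bound is already $\Omega_d\bigl((mS/(M+N))^d S\bigr)$, which is exactly the inner expression of the target. In the complementary regime $mS/(M+N)>1$ the derived bound reads $\Omega_d(S)$; using $m/(M+N)\le 1$ we then have $(mS/(M+N))^d\le S^d$, and the running hypothesis $\|f\|_{L^2(\Omega)}^2<1$, back-propagated through $\|f\|_{L^2(\Omega)}^2\ge \Omega_d(S)$, forces $S=O_d(1)$, whence $(mS/(M+N))^d S\le O_d(S)$ and the derived bound again dominates the target after the outer truncation at $1$ absorbs any constant-factor slack; choosing the $\Omega_d$-constant in the proposition small enough compared to the constant produced by our derivation closes the bookkeeping.

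The main obstacle is exactly the implicit appearance of $\|f\|_{L^2(\Omega)}^2$ on both sides of the estimate via the factor $1+\|f\|_{L^2(\Omega)}^2$ in Lemma \ref{zerutal}, which arises from the Leibniz rule and Gagliardo--Nirenberg interpolation applied to the cut-offs $f\eta_i$. The outer cap at $1$ is precisely what sidesteps a self-referential quadratic inequality: one branch of the case split trivializes, and the other furnishes a uniform bound on $1+\|f\|_{L^2(\Omega)}^2$. Beyond this, the only delicate point is arranging all the hidden constants $c_d$, the $O_d$-constants from the two lemmas, and the $\Omega_d$-constant in the proposition to be compatible so that the two regimes of $\beta$ dovetail with the two regimes induced by the outer $\min\{1,\cdot\}$.
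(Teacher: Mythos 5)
Your proof is correct and takes essentially the same approach as the paper's: plug the bound from Lemma~\ref{zerutal} into Lemma~\ref{tassadar}, then tune $\beta$ so that the negative term is at most half the positive one. Where you add something is in the clamp $\beta=\min\{1/2,\beta^*\}$ and the ensuing case split on $mS/(M+N)\gtrless 1$; the paper simply plugs in $\beta^* = \frac{\min_{i\in I}r_i^{-d-1}\,S}{8C_d(M+N)}$ without verifying the prerequisite $\beta\in(0,1)$ of Lemma~\ref{tassadar}, which is exactly the edge case your second regime handles (and it is legitimately needed, since for a general $f\in L^2(\Omega)$ the quantity $S$ need not be small a priori). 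Your argument in that regime — using $\|f\|^2_{L^2(\Omega)}<1$ together with the already-derived $\|f\|^2_{L^2(\Omega)}\ge\Omega_d(S)$ to force $S=O_d(1)$, and then $m/(M+N)\le 1$ to deduce $T=(mS/(M+N))^dS\le S^{d+1}=O_d(S)$ — is sound, though stated more elaborately than necessary; the bookkeeping is just ``shrink the proposition's implicit constant by $c_1^d$.'' One small remark on the paper itself: its displayed choice of $\beta$ has $\max_{i\in I}r_i^{-d-1}$ in the numerator, which does not make the subtracted term cancel to $S/2$ against $\max_{i\in I}r_i^{d+1}$; this must be a typo for $\min_{i\in I}r_i^{-d-1}$ (matching the final displayed bound and the proposition statement), and your $m$ in the numerator of $\beta^*$ is the right reading.
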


\begin{proof}
	Without loss of generality suppose that $\|f\|^2_{L^2(\Omega)}\le 1$. Then from Lemma \ref{zerutal}, there is a constant $C_d$ such that
	\begin{equation}
		[f]_{\eta, \mathcal{S}, I}\le C_d\left(c^{d+1} \|f\|_{\cH_c}+\max\limits_{i\in I}r_i^{-d-1}\right).
	\end{equation}

	From Lemma \ref{tassadar}, we have for any $\beta\in (0,1)$:
	
	\begin{equation}
		\begin{split}
			\|f\|_{L^2(\Omega)}^2&\ge \frac{3}{4}\frac{\beta^d\pi^{\frac{d}{2}}}{2^d\Gamma(\frac{d}{2}+1)}\left(\sum_{i\in I}r_i^{d}f(X_i)^2
			-4\beta [f]_{\eta,\mathcal{S},I}\max\limits_{i\in I}r_i^{d+1}\right)\\
			&\ge \frac{3}{4}\frac{\beta^d\pi^{\frac{d}{2}}}{2^d\Gamma(\frac{d}{2}+1)}\left(\sum_{i\in I}r_i^{d}f(X_i)^2
			-4\beta C_d\max\limits_{i\in I}r_i^{d+1}\left(c^{d+1} \|f\|_{\cH_c}+\max\limits_{i\in I}r_i^{-d-1}\right)\right).
	\end{split}
	\end{equation}

	Taking \begin{equation}
		\beta= \frac{\max\limits_{i\in I}r_i^{-d-1}\sum_{i\in I}r_i^{d}f(X_i)^2}{8C_d\left(c^{d+1} \|f\|_{\cH_c}+\max\limits_{i\in I}r_i^{-d-1}\right)},
	\end{equation}
	we get
	\begin{equation}
		\begin{split}
			\|f\|_{L^2(\Omega)}^2 &\ge \frac{3}{4}\frac{\beta^d\pi^{\frac{d}{2}}}{2^d\Gamma(\frac{d}{2}+1)}   \left(\sum_{i\in I}r_i^{d}f(X_i)^2 -\frac{1}{2}\sum_{i\in I}r_i^{d}f(X_i)^2\right)\\
			&\ge\frac{3}{8}\frac{ \pi^{\frac{d}{2}}}{2^d\Gamma(\frac{d}{2}+1)}\left(\frac{\min\limits_{i\in I}r_i^{-d-1}\sum_{i\in I}r_i^{d}f(X_i)^2}{8C_d\left(c^{d+1} \|f\|_{\cH_c}+\max\limits_{i\in I}r_i^{-d-1}\right)}\right)^d \sum_{i\in I}r_i^{d}f(X_i)^2  \\
			&\ge  \Omega_d\left(\left(\frac{\min\limits_{i\in I}r_i^{-d-1}\sum_{i\in I}r_i^{d}f(X_i)^2}{ c^{d+1} \|f\|_{\cH_c}+\max\limits_{i\in I}r_i^{-d-1} }\right)^d \sum_{i\in I}r_i^{d}f(X_i)^2 \right).
		\end{split}
	\end{equation}

\end{proof}

\subsection{Upper Bound of \texorpdfstring{$\langle\algo_{c}\rangle_{\mathcal{H}_c}$}{sdf}}

\begin{prop}
\label{interpolation1}
	With probability at least $1-O_{d,\rho}(\frac{1}{\sqrt{n}})$, for any $c>0$ there is a function $g$ interpolating $\mathcal{S}$ such that
	\begin{equation}
		\langle g\rangle_{\cH_c}\le
		\frac{1}{3}\|f_0\|_{L^2(\omega)}^2+O_{d, \rho, f_0}\left(\frac{\sqrt[d]{n}}{c}\left(1+\frac{\sqrt[d]{n}}{c}\right)^{d}\right)
	\end{equation}

	Since $\algo_{c}$ has the smallest RKHS norm among all interpolating functions, we have
	\begin{equation}
	\label{interpolation_eq}
		\langle \algo_{c}\rangle_{\cH_c}\le
		\frac{1}{3}\|f_0\|_{L^2(\omega)}^2+O_{d, \rho, f_0}\left(\frac{\sqrt[d]{n}}{c}\left(1+\frac{\sqrt[d]{n}}{c}\right)^{d}\right)
	\end{equation}
\end{prop}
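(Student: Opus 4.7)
The plan is to construct an explicit interpolant $g$ of $\mathcal{S}$ whose RKHS norm satisfies the stated bound; the conclusion for $\algo_{c}$ then follows since it minimizes the RKHS norm among interpolants. Fix a smooth bump $\eta : \mathbb{R}^d \to [0,1]$ with $\eta(0)=1$, $\mathrm{supp}\,\eta\subset B(0,1)$, and $\|\eta\|_{L^2}^2$ chosen small enough (depending on $d,\rho,f_0$) to produce the coefficient $\tfrac{1}{3}$ below. For each $i$, set $\delta_i=\min(r_i/2,\,1/c)$, $\eta_i(x)=\eta((x-X_i)/\delta_i)$, and put $g(x)=\sum_{i=1}^n Y_i\,\eta_i(x)$. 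Since $\delta_i\le r_i/2$ and $\|X_i-X_j\|\ge\max(r_i,r_j)\ge r_i/2+r_j/2$ for $i\ne j$, the supports are pairwise disjoint and $g(X_j)=Y_j$ for every $j$.

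By Proposition~\ref{sunephone_is_always_my_bigbrother}, $\langle g\rangle_{\cH_c}$ is a fixed $d$-dependent combination of $c^{-2j}\|D^j g\|_{L^2(\mathbb{R}^d)}^2$ for $j=0,\ldots,(d+1)/2$, and disjointness gives $\|D^jg\|_{L^2}^2=\|D^j\eta\|_{L^2}^2\sum_i Y_i^2\delta_i^{d-2j}$. For $j=0$ two bounds are available. Using $\delta_i\le r_i/2$, $\|g\|_{L^2}^2\le 2^{-d}\|\eta\|_{L^2}^2\sum_i Y_i^2 r_i^d$, which via $Y_i^2\le 2f_0(X_i)^2+2\le O_{f_0}(1)$ and the deterministic packing estimate $\sum_i r_i^d\le O_d(1)$ from the appendix gives $\|g\|_{L^2}^2\le C_{d,\rho,f_0}\|\eta\|_{L^2}^2$; the calibration $\|\eta\|_{L^2}^2 = \tfrac{1}{3}\|f_0\|_{L^2(\Omega)}^2/C_{d,\rho,f_0}$ then forces $\|g\|_{L^2}^2\le\tfrac{1}{3}\|f_0\|_{L^2(\Omega)}^2$. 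Using $\delta_i\le 1/c$ and $\sum Y_i^2\le O_{f_0}(n)$ instead gives $\|g\|_{L^2}^2\le O_{d,\rho,f_0}(t^d)$ with $t:=n^{1/d}/c$. Taking whichever is smaller in each regime of $c$ and noting $t^d\le t(1+t)^d$ yields $\|g\|_{L^2}^2\le\tfrac{1}{3}\|f_0\|_{L^2(\Omega)}^2+O_{d,\rho,f_0}(t(1+t)^d)$.

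For $j\ge 1$, split the sum $\sum_i Y_i^2\delta_i^{d-2j}$ at $I_1=\{r_i\le 2/c\}$ and $I_2=\{r_i>2/c\}$: on $I_1$ use $\delta_i=r_i/2$ with Proposition~\ref{ri1}'s bound $\sum_i r_i^{d-2j}\le O_d(n^{2j/d})$ (valid since $-1\le d-2j\le d$), and on $I_2$ use $\delta_i=1/c$ with $\sum Y_i^2\le O_{f_0}(n)$. This yields $c^{-2j}\|D^jg\|_{L^2}^2\le O_{d,\rho,f_0}(t^{2j}+t^d)$, and summing over $j$ while noting $t^{2j},t^d\le t(1+t)^d$, the total derivative contribution is $O_{d,\rho,f_0}(t(1+t)^d)$. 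Combining the $j=0$ and $j\ge 1$ estimates yields the desired bound on $\langle g\rangle_{\cH_c}$, and the $1-O(1/\sqrt n)$ failure probability comes entirely from the invocation of Proposition~\ref{ri1}. The main technical subtlety is ensuring that the combination of the two $L^2$-bounds for $g$ and the two-regime split for the derivative sums produces the exact leading coefficient $\tfrac{1}{3}$ with residual of the specific form $O(t(1+t)^d)$ uniformly across all $c>0$, since this sharp constant (strictly less than $1$) is what powers the Second Method.
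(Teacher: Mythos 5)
Your argument reproduces the paper's proof almost exactly: both constructions take $g=\sum_i Y_i\,\eta((x-X_i)/\delta_i)$ with disjoint supports inside the balls $B(X_i,r_i/2)$, calibrate the bump so that $\|g\|_{L^2}^2\le\frac13\|f_0\|_{L^2(\Omega)}^2$ via the deterministic packing bound $\sum_i r_i^d\lesssim 1$, and control the derivative terms through the scaling $\|D^j\eta_i\|_{L^2}^2 = \delta_i^{d-2j}\|D^j\eta\|_{L^2}^2$ together with Proposition~\ref{ri1}. The only deviation is your $c$-dependent cap $\delta_i=\min(r_i/2,1/c)$, which introduces the $I_1/I_2$ case split; this is harmless but unnecessary, because with the paper's fixed scale $\delta_i=\alpha r_i$ the factor $c^{-2j}$ already built into $\langle\cdot\rangle_{\cH_c}$ supplies all the needed $c$-dependence: $c^{-2j}\sum_i r_i^{d-2j}\lesssim(n^{1/d}/c)^{2j}$, which is already dominated by $\frac{\sqrt[d]{n}}{c}(1+\frac{\sqrt[d]{n}}{c})^d$ for every $1\le j\le\frac{d+1}{2}$.
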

\begin{proof}
Define $r_i=\min\limits_{j\neq i}\|X_i-X_j\|$ and 
\begin{equation}
	\eta(x)=\begin{cases}
		1,& \|x\|\le \frac{1}{4}\\
		e^{1- \frac{1}{2-4\|x\|}}, &\frac{1}{4}<\|x\|<\frac{1}{2}\\
		0, & \|x\|\ge \frac{1}{2}
	\end{cases}
\end{equation}

and for $\alpha\in (0,\frac{1}{2})$ take
\begin{equation}
	g_\alpha(x):=\sum\limits_{i=1}^n Y_i\eta\left(\frac{x-X_i}{\alpha r_i}\right).
\end{equation}

First,

\begin{equation}
	\begin{split}
		\|g_ \alpha\|^2_{L^2(\mathbb{R}^d)}&= \sum_iY_i^2\|\eta_{X_i, \alpha r_i}\|_{L^2(\mathbb{R}^d)}^2\\
		&=\alpha^d \|\eta\|_{L^2(\mathbb{R}^d)}^2\sum_iY_i^2 r_i^d\\
		&\le \alpha^d \|\eta\|_{L^2(\mathbb{R}^d)}^2\sum_i (\|f_0\|_{L^\infty(\Omega)}+1)^2 r_i^d\\
		&\le \alpha^d \|\eta\|_{L^2(\mathbb{R}^d)}^2 (\|f_0\|_{L^\infty(\Omega)}+1)^2 \sum_ir_i^d\\
		&\le \frac{2^d|\Omega|}{|B_d(1)|}\alpha^d \|\eta\|_{L^2(\mathbb{R}^d)}^2 (\|f_0\|_{L^\infty(\Omega)}+1)^2 \\
		&\le O_d(\alpha^d)
	\end{split}
\end{equation}
Therefore, we can take $\alpha$ to be a constant dependent only on $d$ and $f_0$ such that
\begin{equation}
		\|g_ \alpha(x)\|_{L^2(\mathbb{R}^d)}^2\le \frac{1}{3}\|f_0\|_{L^2(\Omega)}^2
		\end{equation}

Since 
\begin{equation}
	\langle \eta\left(\frac{x-X_i}{\alpha r_i}\right)\rangle_{k}= \alpha^{d-2k}r_i^{d-2k} \langle \eta\rangle_{k}
\end{equation}

and

\begin{equation}
	\langle u,v\rangle_{\mathcal{H}_c}=0, \text{ if }\text{supp }u\cap \text{supp }v=\emptyset
\end{equation}

then for $k\in \mathbb{N}$ we have
\begin{equation}
	\langle g\rangle_{\mathcal{H}_c}=\|g\|_{L^2(\mathbb{R}^d)}^2+
	\sum\limits_{i=1}^n Y_i^2(\alpha r_i)^{d-2k} \langle \eta\rangle_{k}.
\end{equation}

So when $d$ is odd,

\begin{equation}
	\begin{split}
		\langle g \rangle_{\mathcal{H}_c}&=
	\|g\|_{L^2(\mathbb{R}^d)}^2+
	\sum\limits_{k=1}^{\frac{d+1}{2}}\sum\limits_{i=1}^n \binom{\frac{d+1}{2}}{k}Y_i^2c^{-2k}(\alpha r_i)^{d-2k} \langle \eta\rangle_{k}\\
	&\le
	\frac{1}{3}\|f_0\|_{L^2(\mathbb{R}^d)}^2+
	\sum\limits_{k=1}^{\frac{d+1}{2}}\sum\limits_{i=1}^n \binom{\frac{d+1}{2}}{k}\left(\|f_0\|_{L^\infty(\Omega)}+1\right)^2c^{-2k}(\alpha r_i)^{d-2k} \langle \eta\rangle_{k}\\
	&\le 
	\frac{1}{3}\|f_0\|_{L^2(\mathbb{R}^d)}^2+
	O_{d, \rho}\left(
	\left(\|f_0\|_{L^\infty(\Omega)}+1\right)^2\sum\limits_{k=1}^{\frac{d+1}{2}}\sum\limits_{i=1}^n c^{-2k}(\alpha r_i)^{d-2k}
	\right)\\
	&\le 
	\frac{1}{3}\|f_0\|_{L^2(\mathbb{R}^d)}^2+
	O_{d, \rho,f_0}\left(
	\sum\limits_{k=1}^{\frac{d+1}{2}}\sum\limits_{i=1}^n  c^{-2k}r_i ^{d-2k}
	\right).
	\end{split}
\end{equation}

From Proposition \ref{ri1}, with probability at least $1-O_{d, \rho}\left(\frac{1}{\sqrt{n}}\right)$ we have
\begin{equation}
	\sum\limits_{i=1}^n  r_i^{d-2k}\le O_{d, \rho}\left(n^{2k/d}\right).
\end{equation}

Then with the same probability,

\begin{equation}
	\begin{split}
		\langle g\rangle_{\cH_c}\le \frac{1}{3}\|f_0\|_{L^2(\omega)}^2+O_{d, \rho, f_0}\left(\frac{\sqrt[d]{n}}{c}\left(1+\frac{\sqrt[d]{n}}{c}\right)^{d}\right).
	\end{split}
\end{equation}

\end{proof}

\bibliography{refs}

\newpage

\appendix

\end{document}